\documentclass{article}

\PassOptionsToPackage{numbers, compress}{natbib}
\usepackage{natbib}
\usepackage{macros}

\usepackage[preprint]{neurips_2026}
\usepackage[utf8]{inputenc} % allow utf-8 input
\usepackage[T1]{fontenc}    % use 8-bit T1 fonts
\usepackage{hyperref}       % hyperlinks
\usepackage{cleveref}       % smart cross-references (\cref)
\usepackage{url}            % simple URL typesetting
\usepackage{booktabs}       % professional-quality tables
\usepackage{amsfonts}       % blackboard math symbols
\usepackage{nicefrac}       % compact symbols for 1/2, etc.
\usepackage{microtype}      % microtypography
\usepackage{xcolor}         % colors

\definecolor{mayablue}{rgb}{0.21,0.49,0.74}
\NewDocumentCommand{\hl}{ mO{} }{\textcolor{mayablue}{\textsuperscript{\textit{Hanlin}}\textsf{\textbf{\small[#1]}}}}
\NewDocumentCommand{\todo}{ mO{} }{\textcolor{mayablue}{\textsuperscript{\textit{Hanlin}}\textsf\textbf{\small[{TODO: #1]}}}}

\usepackage{amsmath,amssymb}
\usepackage{algorithm}
\usepackage{algpseudocode}
\usepackage{graphicx}
\usepackage{wrapfig}
\usepackage{booktabs}
\usepackage{makecell}

\usepackage[most]{tcolorbox}

\definecolor{takeawayborder}{RGB}{196,114,34}
\definecolor{takeawaybg}{RGB}{255,243,224}

\newtcolorbox{takeawaybox}{
  colback=takeawaybg,
  colframe=takeawayborder,
  boxrule=0.9pt,
  arc=2pt,
  left=6pt,
  right=6pt,
  top=5pt,
  bottom=5pt
}

\title{A Unifying View of Attention Sinks: \\Two Algorithms, Two Solutions}

\author{%
  Lukas Fesser\thanks{Equal contribution and corresponding author.}
  \quad
  Mozes Jacobs\footnotemark[1]
  \quad
  Thomas Fel\footnotemark[1]
  \quad
  Andy Keller
  \quad
  Sham Kakade
  \\
  Kempner Institute \\
  Harvard University \\
  Cambridge, MA \\
  \small{\texttt{\{lukas\_fesser, mozesjacobs, tfel\}@g.harvard.edu}}
}

\begin{document}

\maketitle

\vspace{-6mm}

\begin{abstract}
    \vspace{-1mm}
When attention concentrates on a single token, a \emph{sink}, what is the model actually computing?
Attention sinks are ubiquitous in softmax transformers, yet this shared visual signature can hide fundamentally different algorithms. 
We show that visually similar sink patterns can reflect two distinct mechanisms: \i{i} adaptive ~\nop\footnote{From the \nop~instruction in assembly. On many architectures, \nop~is encoded as all zeros (unmapped memory returns zeroes by default), so a jump into the void does nothing gracefully. }, where a head suppresses its update by routing to a null token, and \i{ii} broadcast, where a sink aggregates and redistributes global information. 
%
%In that case, sinks serve an analogous role: a safe destination when there is nothing useful to compute.
Proposed interventions like gating or registers work because they implicitly target one or the other, revealing a duality between method and assumed mechanism: gating implicitly assumes ~\nop; registers implicitly assume broadcast. 
Each mechanism leaves distinct traces (\nop~sinks exhibit negligible value norms; broadcast sinks induce low-rank outputs) which we formalize on synthetic tasks and use to derive practical diagnostics. Applied to pretrained vision transformers, these diagnostics reveal that both mechanisms exist at scale: sinks transition from \texttt{CLS} in early layers to patches in deeper layers, and concentrate in specialized heads. 
Strikingly, register tokens, designed for broadcast, are repurposed to also serve ~\nop, confirming that neither intervention alone suffices. Combining gating with registers yields complementary gains in stability and performance.\footnote{Code available at: \url{https://github.com/KempnerInstitute/attention_sink}
%\url{https://anonymous.4open.science/r/attention_sink-5867/README.md}
}
Overall, we find that the same attention pattern can reflect two very different computations and effective intervention requires first asking what the model is actually computing.
\end{abstract}

%\fel{thomas: pass on the abstract}
%\fel{thomas: add related work on intro}
%\fel{thomas: craft last section with mozes}
%\fel{thomas: re-read full paper}
%\fel{thomas: review appendix}

\vspace{-3mm}
\section{Introduction}

\begin{figure}[t]
\centering
\vspace{0mm}
\includegraphics[width=0.85\linewidth]{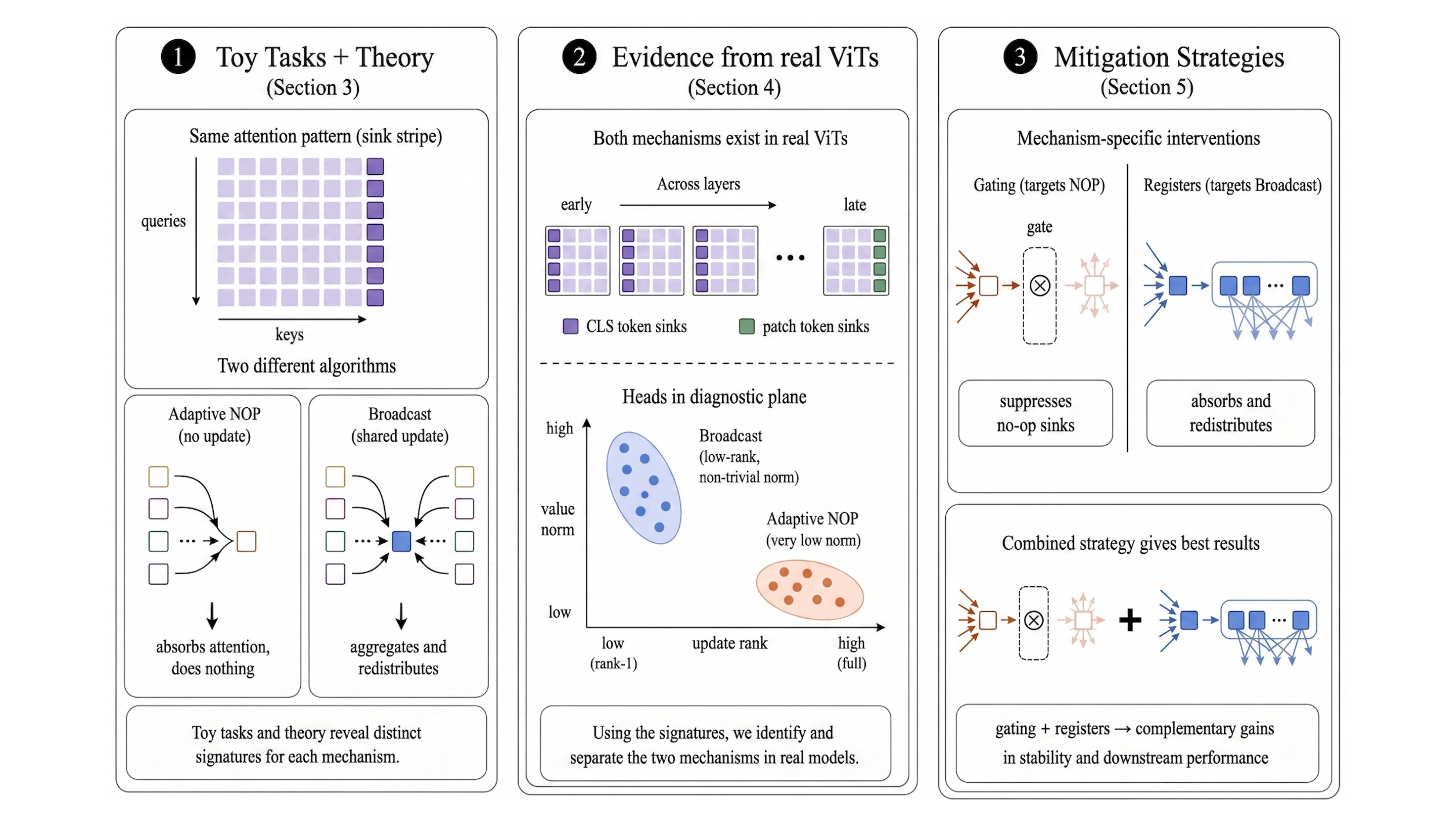}
\vspace{0mm}
\caption{
\textbf{Same Visual Signature, Different Algorithms.}
Visually, attention sinks appear identically as vertical stripes where multiple tokens attend to a single position. However, this pattern can implement two fundamentally different algorithms.
\textbf{(Left) Adaptive \nop:} The sink acts as a suppression mechanism (``trash can''). Tokens attend here to effectively perform an identity operation and avoid updating their state.
\textbf{(Right) Broadcast:} The sink acts as a communication hub (``coffee cup''). Tokens attend here to retrieve shared global information, actively writing a payload to the residual stream.
We provide diagnostics to distinguish these regimes, showing that distinct interventions are required for each.
}
\label{fig:teaser}
\vspace{-5mm}
\end{figure}

\looseness=-1
Attention sinks -- tokens that attract disproportionate attention mass across many queries -- are pervasive in softmax transformers~\cite{vaswani2017attention}, especially vision transformers (ViTs)~\cite{dosovitskiy2020image,zhai2022scaling,dehghani2023scaling,tschannen2025siglip2} and large language models (LLMs)~\cite{brown2020language,touvron2023llama,ouyang2022training}. Empirically, they appear as near-vertical ``stripes'' in attention maps~\cite{barbero2025llms,queipo2025attention,cancedda2024spectral,lu2025artifacts}, often at the \texttt{[CLS]} token or specific patch locations~\cite{darcet2023vision,jiang2025vision,wu2025emergence}. Yet the same visual signature can reflect very different computations, raising a practical question: when we see a sink, what algorithm is the model executing, and what intervention should we apply?

Related work has connected sink-like patterns to spectral concentration, rank collapse, anisotropy, massive activations, token-wise norm imbalance, and softmax saturation~\cite{qiu2026unified,zhang2025attention,cancedda2024spectral,queipo2025attention,lu2025artifacts,sun2024massive,dettmers2022gpt3,bondarenko2023quantizable,rybakov2024methods,kang2025see}. These effects can harm stability and representation quality~\cite{gallego2025hidden}. At the same time, sinks can be useful: in bidirectional attention, a token that aggregates and redistributes global information can act as a learned workspace~\cite{darcet2023vision}. Thus, ``fixing'' sinks in ViTs without understanding their role risks either removing useful computation or missing the source of instability.

The central challenge is that attention patterns alone are not mechanistic explanations~\cite{akula2022attention,bibal2022attention,wiegreffe2019attention}. A softmax head must place probability mass somewhere, so concentrating on a sink can mean either that the sink is informative or that it is a safe destination when the head should do nothing~\cite{guo2024active}. These cases can produce the same stripe: routing to a token with near-zero value norm implements an effective no-op (\nop) on the residual stream, while routing to a token with meaningful value broadcasts a shared component to many positions. Interventions designed for one mechanism may therefore fail when the other is active.

This perspective helps explain why the literature has not converged on a single mitigation. Gating mechanisms~\cite{qiu2025gated} and register tokens~\cite{darcet2023vision} both reduce sink-like behavior, but differ fundamentally in design and, under our view, target different computations rather than one shared pathology. What has been missing is a unified framing that separates these hypotheses, makes each identifiable in controlled settings, and yields diagnostics for determining which mechanism is operative in a given network, layer, or head. We take this step by treating sinks as arising from two computationally distinct algorithms---adaptive no-op (\nop) and broadcast---and by proposing measurable signatures based on both attention structure and residual-stream statistics.

Concretely, we develop a two-hypothesis view and connect each hypothesis to an intervention that targets it. Under \textbf{Adaptive \nop}, sinks suppress residual updates, predicting near-zero sink value norms and spectral/activation signatures of hard gating. Under \textbf{Broadcast}, sinks act as communication hubs, predicting non-negligible sink values, approximately low-rank attention outputs, and shared components across tokens. We instantiate both mechanisms with minimal synthetic objectives, use them to develop diagnostics, and apply the resulting tools to pretrained ViTs, where we find that both mechanisms co-occur. We further show that gated attention, which addresses \nop{} sinks, and register tokens, which support or regularize broadcast computation, provide complementary benefits. In summary, our contributions are the following:

\begin{itemize}
\item We show in controlled settings that visually similar sink patterns can implement distinct computations, and we derive diagnostics that distinguish \nop-like and broadcast-like sink behavior beyond what can be inferred from attention maps alone.
\item We use these diagnostics to identify both regimes in pretrained ViTs, showing that they coexist within the same model across layers, heads, and register-equipped variants.
\item We use this distinction to reinterpret sink mitigation, showing that gating and register tokens are complementary because they target different sink computations, and that combining them can yield stronger mitigation and improved downstream performance in settings where sink behavior matters.
\end{itemize}

\vspace{-2mm}
\section{Preliminaries and Related Work}

\subsection{Softmax Attention and Attention Sinks}

We consider the attention mechanism of a Vision Transformer \cite{dosovitskiy2020image}, which operates on a sequence of $n$ patch tokens with hidden representations $\vx_1, \ldots, \vx_n \in \mathbb{R}^d$. Unlike autoregressive language models \cite{qiu2025gated, barbero2025llmsattendtoken, touvron2023llama}, ViTs employ bidirectional attention where each token can attend to all positions in the sequence. %For a single attention head, queries, keys, and values are computed as $\vq_i = \vx_i \mW_Q$, $\vk_i =  \vx_i \mW_K$, and $\vv_i = \vx_i \mW_V$, where $\mW_Q, \mW_K, \mW_V \in \mathbb{R}^{d \times d_h}$ are learned projection matrices.
For a single attention head, treating individual token representations as column vectors, queries, keys, and values are computed as
$\vq_i = \mW_Q^\top \vx_i$, $\vk_i = \mW_K^\top \vx_i$, and $\vv_i = \mW_V^\top \vx_i$, where $\mW_Q, \mW_K, \mW_V \in \mathbb{R}^{d \times d_h}$ are learned projection matrices, so that $\vq_i,\vk_i,\vv_i \in \mathbb{R}^{d_h}$.
%
%We collect these into matrices $\mQ, \mK, \mV \in \mathbb{R}^{n \times d_h}$ whose $i$-th rows are $\vq_i^\top, \vk_i^\top, \vv_i^\top$, respectively. 
We collect these column vectors into row-stacked matrices $\mQ, \mK, \mV \in \mathbb{R}^{n \times d_h}$ whose $i$-th rows are $\vq_i^\top, \vk_i^\top, \vv_i^\top$, respectively.
%
%The attention output at position $i$ is given by:
%\begin{equation}
%\label{eq:attention}
%\vo_i = \sum_{j=1}^{n} \A_{ij} \vv_j, ~~ \text{where} ~~
%\A_{ij} = \frac{\exp(\vq_i^\top \vk_j / \sqrt{d_h})}{\sum_{l=1}^{n} \exp(\vq_i^\top \vk_l / \sqrt{d_h})}.
%\end{equation}
%Equivalently, in matrix form, the head output is $\mO = \mA \mV \in \mathbb{R}^{n \times d_h}$, with $i$-th row $\vo_i^\top$.
%Equivalently, in matrix form, the head output is $\mO = \mA \mV \in \mathbb{R}^{n \times d_h}$, whose $i$-th row is $\vo_i^\top$.
The single-head attention output at position $i$ is given by:
\begin{equation}
\label{eq:attention}
\vo_i^{\mathrm{head}} = \sum_{j=1}^{n} \A_{ij} \vv_j, ~~ \text{where} ~~
\A_{ij} = \frac{\exp(\vq_i^\top \vk_j / \sqrt{d_h})}{\sum_{l=1}^{n} \exp(\vq_i^\top \vk_l / \sqrt{d_h})}.
\end{equation}
Equivalently, in matrix form, the single-head output is 
$\mO^{\mathrm{head}} = \mA \mV \in \mathbb{R}^{n \times d_h}$, 
whose $i$-th row is $(\vo_i^{\mathrm{head}})^\top$.

The attention weights $\A_{ij} \in (0, 1)$ are bounded, and the attention matrix for all tokens $\bm{A} \in \mathbb{R}^{n \times n}$ is row-stochastic (forming a probability distribution over all $n$ positions for each query $i$), thereby enabling information exchange within a single layer.
%
%We also adopt the standard residual-stream convention: letting $\vx_i^{(\ell)}$ denote the hidden representation at layer $\ell$ and $\vo_i^{(\ell)}$ the output of the attention head (or attention block) at that layer, the residual update takes the form $\vx_i^{(\ell+1)}=\vx_i^{(\ell)} + \vo_i^{(\ell)}$
%We use the standard residual-stream convention: letting $\vx_i^{\ell}$ denote the hidden representation at layer $\ell$ and $\vo_i^{\ell} \in \mathbb{R}^{d}$ denote the output of the full attention block after any head concatenation and output projection, %the residual update takes the form $\vx_i^{\ell+1}=\vx_i^{\ell} + \vo_i^{\ell}$
%the residual update takes the form
%\begin{equation}
%\label{eq:residual_update}
%\vx_i^{\ell+1}=\vx_i^{\ell} + \vo_i^{\ell}.
%\end{equation}
%
%(with other sublayers like MLPs omitted here for clarity).
We use the standard residual-stream convention: $\vo_i^{\ell}\in\mathbb{R}^d$ is added to $\vx_i^{\ell}\in\mathbb{R}^d$ as $\vx_i^{\ell+1}=\vx_i^{\ell}+\vo_i^{\ell}$, with other sublayers such as MLPs omitted here for clarity.

\looseness=-1
We now formally recall a central notion in our study: attention sinks. We define an attention sink (consistent with prior work \cite{ruscio2025you, gu2024attention}) as a position that attracts disproportionate attention mass across many queries, without committing to a specific underlying cause.
\begin{definition}[\textbf{$\epsilon$-Attention Sink}]
\label{def:epsilon_sink}
%Let $\epsilon \in [0, 1)$ be a small scalar. A position $s \in \{1, \ldots, n\}$ is an \emph{$\epsilon$-attention sink} with respect to a set of query indices $\mathcal{I} \subseteq \{1, \ldots, n\}$ (with equality unless explicitly stated otherwise) if, for all $i \in \mathcal{I}$, the attention weight satisfies: $\A_{is} \geq 1 - \epsilon$.
Let $\epsilon \in [0, 1)$ be a small scalar. A position $s \in \{1, \ldots, n\}$ is an \emph{$\epsilon$-attention sink} with respect to a set of query indices $\mathcal{I} \subseteq \{1, \ldots, n\}$, where by default $\mathcal{I} = \{1,\ldots,n\}$, if, for all $i \in \mathcal{I}$, the attention weight satisfies $\A_{is} \geq 1 - \epsilon$.
\end{definition}
$\epsilon=0$ corresponds to a \emph{perfect} sink (all mass placed on $s$), while smaller $\epsilon$ indicates an \emph{approximately} perfect sink (most mass on $s$). To quantify sink behavior beyond a binary threshold, we use the (average) \emph{sink strength} of a position $s$ over a query set $\mathcal{I}$,
\begin{equation}
\label{eq:sink_strength}
\mathrm{sink}(s;\mathcal{I}) \;=\; \frac{1}{|\mathcal{I}|}\sum_{i \in \mathcal{I}} \A_{is},
\end{equation}
and we may refer to $\max_{s} \mathrm{sink}(s;\mathcal{I})$ when measuring the strongest sink in a head.
This classic definition (see~\cite{gu2025when} \& \cite{barbero2025llmsattendtoken}) captures the empirical phenomenon observed in vision transformers, where certain positions such as the \texttt{[CLS]} token or specific patch locations consistently attract high attention, while remaining agnostic to the mechanism producing this behavior. 

\subsection{Related Work}

\paragraph{Attention sinks in LLMs.}
Attention sinks have been studied extensively in autoregressive language models, where they often appear at early sequence positions or special tokens and can persist even when those positions are not semantically central \citep{barbero2025llmsattendtoken,barbero2025llms,gu2025when,gu2024attention}. Prior work links this behavior to causal masking, long-context degradation, and architectural biases that make a small set of tokens convenient anchors for repeated attention \citep{barbero2025llmsattendtoken,qiu2025gated,barbero2025llms,wu2024role}. More recent analyses connect sink phenomena in LLMs to broader geometric and optimization effects, including spectral concentration, residual-stream imbalance, outlier activations, and dormant or conditionally inactive heads \citep{queipo2025attention,cancedda2024spectral,sun2024massive,dettmers2022gpt3,bondarenko2023quantizable,sandoval2025using}. This line of work has also motivated mitigation strategies such as gated attention and related attention modifications that reduce pathological concentration or allow heads to suppress their outputs more directly \citep{bondarenko2023quantizable,qiu2025gated}. Taken together, these papers establish that sink-like attention in LLMs is a robust and mechanistically meaningful phenomenon rather than a purely visual artifact.

\paragraph{Attention sinks in ViTs.}
In vision transformers, sink-like patterns have been observed around the \texttt{[CLS]} token, specific patch locations, and learned workspace tokens such as registers~\cite{lu2025artifacts,darcet2023vision,jiang2025vision,kang2025see}. Existing work interprets these phenomena from several complementary perspectives. Some papers emphasize sinks as artifacts tied to outliers, saturation, or structured approximations in softmax attention~\cite{lu2025artifacts,bondarenko2023quantizable}, while others argue that sink-like tokens can play a useful computational role by aggregating and redistributing global information~\cite{darcet2023vision}. Register-token methods are especially central in this literature, as they provide dedicated slots that absorb attention mass and often improve representation quality in pretrained ViTs~\cite{darcet2023vision,jiang2025vision}. At the same time, recent analyses of ViT dynamics, rank collapse, oversmoothing, and position bias suggest that sink behavior is intertwined with broader questions about token mixing, representational diversity, and specialization across layers and heads~\cite{jacobs2025block,wang2022anti}. Overall, the ViT literature suggests that sink-like attention is related both to optimization pathologies and to potentially useful global coordination mechanisms, but does not yet provide a practical framework for distinguishing or mitigating these regimes when visually similar sink patterns appear within the same pretrained ViT.

\vspace{-2mm}
\section{Two Hypotheses for Attention Sinks}

The signature of an attention sink (a vertical stripe in the attention map) is computationally ambiguous. Does the model attend to these tokens because they contain crucial information, or precisely because they do not? \textbf{In this paper, we analyze two distinct computational regimes that can give rise to the same visual pattern}. The goal of this section is not to provide a general theory of attention sinks, but to derive stylized mechanistic intuition and operational diagnostics that will later be applied to pretrained ViTs. We formally define these two regimes below, before analyzing their distinct spectral and geometric signatures in Sections~\ref{sec:noop} and \ref{sec:broadcast}. For simplicity, in the hypotheses below we use $\vv_s$ to denote the value contribution in the space being discussed: the single-head value in head-space analyses, and the output-projected value when referring to residual-stream updates.

\begin{hypothesis}[\textbf{Adaptive \nop}]\label{hyp:noop}
An attention head implements an adaptive \nop~mechanism \cite{sandoval2025using, barbero2025llmsattendtoken} if attention concentrates on a sink position $s$. 
For tokens $i$ attending predominantly to $s$, the output $\vo_i$ is negligible and the residual stream remains unchanged through the block, i.e. $\vx_i^{\ell+1} \approx \vx_i^{\ell}$.
%\begin{equation}
    %\vx_i^{\ell+1} \approx \vx_i^{\ell}.
%\end{equation}
\end{hypothesis}

\begin{hypothesis}[\textbf{Broadcast}]\label{hyp:broadcast}
An attention head implements a broadcast mechanism \cite{darcet2023vision} if attention from many positions concentrates on a sink position $s$ whose value vector $\vv_s$ carries meaningful information. 
For tokens $i$ attending predominantly to $s$, the residual stream update is dominated by the broadcast's value: $\vx_i^{\ell+1} - \vx_i^{\ell} \approx \A_{is} \vv_s$.
%For tokens $i$ attending predominantly to $s$, the residual stream update is dominated by the output-projected broadcast value: $\vx_i^{\ell+1} - \vx_i^{\ell} \approx \A_{is} \mW_O \vv_s$.
%\begin{equation}
    %\vx_i^{\ell+1} - \vx_i^{\ell} \approx \A_{is} \vv_s.
%\end{equation}
Consequently, the representations of tokens attending to $s$ acquire a shared linear component aligned with $\vv_s$.
\end{hypothesis}

%\andy{Shouldn't Hypothesis 2 also need some constraint on the norm of $v_s$? Otherwise they are not mutually exclusive. }

While both mechanisms produce similar attention patterns, they imply fundamentally different roles for the sink token and consequently require different interventions. Hypothesis 2 posits that the sink is not a passive operation for discarded attention mass, but an active component of the computation. In the following sections, we dissect them individually. We begin with the Adaptive \nop~hypothesis.

\subsection{Hypothesis 1: Sink as Adaptive \nop.}
\label{sec:noop}
The first hypothesis posits that sinks arise when attention heads learn to selectively suppress their contribution to the residual stream. In transformers, attention outputs are added to the residual stream via $\vx_{i}^{\ell+1} = \vx_{i}^{\ell} + \vo_{i}^{\ell}$. When a head has no useful information to contribute for certain tokens, it must still produce an output due to the softmax normalization constraint \cite{barbero2025llms}. 
%One solution is to route attention toward a position whose value vector has negligible norm, effectively producing a near-zero update. 
Crucially, this behavior is adaptive: for some tokens the head performs a meaningful operation, while for others it routes to the sink and contributes nothing \cite{sandoval2025using}. 
%%%%
\paragraph{\nop~Signatures.}
Under this hypothesis, extreme attention logits and outlier activation norms are byproducts of the model learning to implement a null operation. The sink position serves as a learned null token that absorbs attention when no information transfer is needed. We may ask: is a negligible value norm the only way to achieve \nop~behavior?

% A natural question arises: is a negligible value norm the only way to achieve \nop~behavior?
%
\begin{lemma}[\textbf{Uniqueness of \nop~Solution}]
\label{lem:noop_unique}
Assume a single attention head with sink position $s$ and residual update $\vo_i = \sum_j \A_{ij}\vv_j$. For tokens $i$ routed predominantly to the sink, suppose the head implements an $\epsilon$-approximate \nop, i.e. $\|\vo_i\| \le \epsilon$. Then the no-op condition admits two solutions: the sink value $\vv_s$ can either (i) cancel out the residual contributions from other tokens, or (ii) have zero norm. As the sink becomes perfect ($\epsilon = 0$), the cancellation mechanism vanishes, leaving $\|\vv_s\| = 0$ as the unique solution.
\end{lemma}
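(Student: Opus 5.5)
The plan is to split the head output into its sink and non-sink parts and bound each with the $\epsilon$-sink condition of Definition~\ref{def:epsilon_sink}. For a query $i \in \mathcal{I}$ with $\A_{is} \ge 1-\epsilon$, write
\[
\vo_i = \A_{is}\vv_s + \vr_i, \qquad \vr_i := \sum_{j\neq s}\A_{ij}\vv_j .
\]
Row-stochasticity gives $\sum_{j\ne s}\A_{ij} = 1-\A_{is} \le \epsilon$. With $V_{\max} := \max_j \|\vv_j\|$ (finite for a fixed input), the triangle inequality then gives $\|\vr_i\| \le \epsilon V_{\max}$. The residual is a convex combination of non-sink values scaled by total weight at most $\epsilon$. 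This is the only ``cancellation budget'' available to offset $\vv_s$. For simplicity I use the same $\epsilon$ for the sink threshold and the no-op tolerance; distinct constants $\epsilon_1,\epsilon_2$ work the same way.

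Next, combine this with the no-op condition $\|\vo_i\|\le\epsilon$. The reverse triangle inequality gives
\[
\A_{is}\|\vv_s\| \le \|\vo_i\| + \|\vr_i\| \le \epsilon + \epsilon V_{\max},
\]
so $\|\vv_s\| \le \epsilon(1+V_{\max})/(1-\epsilon)$. This yields the dichotomy. Either $\|\vv_s\|$ is itself small (case (ii)), or $\vv_s$ is non-negligible and $\A_{is}\vv_s \approx -\vr_i$ to within $\epsilon$. The latter is case (i): the sink value is anti-aligned with, and cancels, the weighted residual from the other tokens. I will exhibit both regimes with small explicit examples to show each is genuinely admissible at $\epsilon>0$. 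For (i), take a large $\vv_s$ and a non-sink token $j$ with $\vv_j = -\A_{is}\vv_s/\A_{ij}$. Cancellation of this kind forces $\|\vv_j\| \gtrsim \|\vv_s\|/\epsilon$.

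For the limit $\epsilon = 0$: the sink condition $\A_{is}=1$ forces $\A_{ij}=0$ for all $j \ne s$, so $\vr_i = 0$ exactly. Then $\vo_i = \vv_s$, and $\|\vo_i\| \le 0$ gives $\vv_s = 0$, so the cancellation branch disappears. I will also remark that softmax weights are strictly positive, so $\epsilon=0$ is attained only as a limit of diverging logits. The correct statement is therefore that along any sequence with $\epsilon\to 0$ and bounded non-sink values, $\|\vv_s\| \le \epsilon(1+V_{\max})/(1-\epsilon) \to 0$.

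The main obstacle is making ``the cancellation mechanism vanishes'' precise when $\epsilon>0$ but small. Cancellation survives only if $V_{\max}$ grows like $1/\epsilon$, so the uniqueness claim needs an assumption that the non-sink value norms stay bounded independently of $\epsilon$. I would state this assumption explicitly, for instance uniformly bounded $\mW_V$ and inputs. Under it, the bound above gives $\|\vv_s\| = O(\epsilon)$, which makes the informal ``unique solution'' claim quantitative.
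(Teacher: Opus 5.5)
Your proposal is correct and follows essentially the same route as the paper: decompose $\vo_i = \A_{is}\vv_s + \vr_i$, bound $\|\vr_i\| \le \epsilon V_{\max}$ via row-stochasticity, identify the small-norm and cancellation regimes, and note that at $\epsilon=0$ the residual vanishes, so $\vo_i=\vv_s$ forces $\vv_s=0$. Your bound $\|\vv_s\| \le \epsilon(1+V_{\max})/(1-\epsilon)$ holds for every solution, and together with your explicit assumption that non-sink value norms stay bounded as $\epsilon\to 0$, it makes precise what the paper's step ``$\vr_i \to 0$'' tacitly assumes.
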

% See Appendix~\ref{sec:noop_unique} for the proof. % Andy: Not sure if this is right Appendix section.
%
%
\begin{wrapfigure}{r}{0.5\linewidth}
\vspace{-6mm}
\centering
\includegraphics[width=0.95\linewidth]{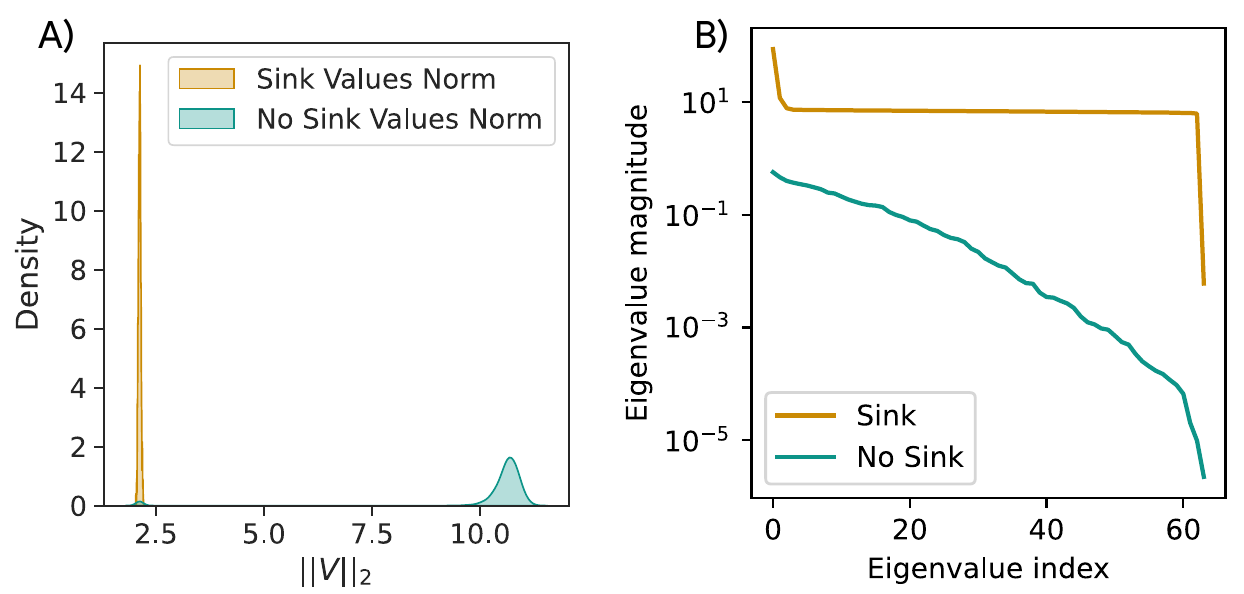}
\caption{
\textbf{NOP sink signatures.}
\textbf{(A)} Sink solutions learn near-zero sink value norms, producing negligible updates.
\textbf{(B)} Sink models exhibit a dominant singular value in $W_QW_K^\top$, consistent with a learned gating direction.
\vspace{-2mm}
}
%\caption{
%\textbf{(A) \nop~solution is characterized by low value norm.} In the sink solution, the value vector at the sink position has near-zero norm, so routing attention to the sink produces an (almost) zero update and effectively implements a \nop. \textbf{(B) Spectral signature of sink solutions.} Models that learn a sink exhibit a characteristic spectrum with a much larger leading singular value of $\bm{W}_{Q}\bm{W}_{K}^{\tr}$ than no-sink models, indicating a distinct ``flag'' that can be used to identify the sink tokens and route attention towards it. 
%}
\vspace{-6mm}
\label{fig:no_op_signatures}
\end{wrapfigure}

Having established the uniqueness of the \nop~solution, we now address the adaptive nature of this function -- that is, it must not perform a \nop~permanently, but only when triggered. We examine the linear case (given the linearity of $\bm{W}_Q$ and $\bm{W}_K$) to understand how an attention head can function as a conditional gate. The attention probability $\A_{is}$ must saturate to $1$ specifically when a ``flag'' feature is detected.
We show that this gating mechanism requires maximizing the query-key dot product, which can be achieved either through large activation norms at the sink position or through a dominant singular value in $\bm{W}_Q \bm{W}_K^\top$ aligned with the flag direction.
\begin{lemma}[\textbf{Mechanisms for Adaptivity}]\label{lem:adaptivity}
%Assume a linear query-key interaction with $\bq_i = \vx_i\bm{W}_Q$, $\bk_s = \vx_s\bm{W}_K$, and interaction matrix $\bm{\Theta} = \bm{W}_Q \bm{W}_K^\top$. 
%Assume a linear query-key interaction with $\bq_i = \bm{W}_Q^\top \vx_i$, $\bk_s = \bm{W}_K^\top \vx_s$, and interaction matrix $\bm{\Theta} = \bm{W}_Q \bm{W}_K^\top$.
Assume a linear query-key interaction with $\vq_i = \mW_Q^\top \vx_i$, $\vk_s = \mW_K^\top \vx_s$, and interaction matrix $\bm{\Theta} = \mW_Q \mW_K^\top$.
To implement a hard gate ($\A_{is} = 1$) for specific queries, the pre-softmax logit $S_{is} = \langle \vq_i, \vk_s \rangle / \sqrt{d_h}$ must significantly exceed competing logits. In this setting, the sink logit magnitude scales with $\|\bm{\Theta}\|_2 \cdot \|\vx_s\|$. Consequently, the model can implement gating via two regimes: \textbf{(i) Spectral Gating:} The interaction matrix $\bm{\Theta}$ learns a large leading singular value (spectral spike) aligned with the flag direction. \textbf{(ii) Massive Activation:} The sink token $\vx_s$ learns a disproportionately large norm, acting as a ``super-attractor'' or flag variable.
\end{lemma}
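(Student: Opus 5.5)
The plan is to write the sink logit as a bilinear form and then bound it with the operator norm. Since $\vq_i = \mW_Q^\top \vx_i$ and $\vk_s = \mW_K^\top \vx_s$, we have
\[
S_{is} = \frac{\langle \vq_i,\vk_s\rangle}{\sqrt{d_h}} = \frac{\vx_i^\top \bm{\Theta}\,\vx_s}{\sqrt{d_h}},
\]
and Cauchy--Schwarz together with the definition of the spectral norm give
\[
|S_{is}| \le \frac{\|\vx_i\|\,\|\bm{\Theta}\|_2\,\|\vx_s\|}{\sqrt{d_h}}.
\]
With the query norm $\|\vx_i\|$ treated as fixed (for instance by pre-attention LayerNorm), this establishes the scaling claim $S_{is} = O(\|\bm{\Theta}\|_2\,\|\vx_s\|)$. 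To show the bound is attained, take the SVD $\bm{\Theta} = \sum_k \sigma_k \vu_k \vw_k^\top$. When $\vx_s$ is aligned with $\vw_1$ and the query's flag component is aligned with $\vu_1$, the logit equals $\sigma_1 \|\vx_s\|\,\langle \vx_i,\vu_1\rangle/\sqrt{d_h}$, so the bound is tight up to the flag overlap.

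Next I would convert the hard-gate requirement into a logit-gap condition. From the softmax,
\[
\A_{is} = \Big(1+\sum_{j\neq s} e^{S_{ij}-S_{is}}\Big)^{-1},
\]
so $\A_{is}\ge 1-\epsilon$ holds whenever $S_{is}-\max_{j\ne s}S_{ij} \ge \log\big((n-1)(1-\epsilon)/\epsilon\big)$. Exact saturation $\A_{is}=1$ is reached only in the limit of an infinite gap. The required gap therefore grows like $\log(n/\epsilon)$ and diverges as $\epsilon\to 0$. This matches the $\epsilon$-sink of Definition~\ref{def:epsilon_sink} and connects with Lemma~\ref{lem:noop_unique}.

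For the competing logits, the model cannot rely on $\|\bm{\Theta}\|_2$ alone, because the same bound applies to every $S_{ij}$. I would decompose $\vx_s = \alpha \vw_1 + \vr_s$ and $\vx_j = \beta_j \vw_1 + \vr_j$ for non-sink tokens, with small $\beta_j$, because the flag direction is what identifies the sink. Then
\[
S_{is}-S_{ij} \approx \frac{\sigma_1 (\alpha-\beta_j)\langle\vx_i,\vu_1\rangle}{\sqrt{d_h}} + O\!\left(\frac{\sigma_2 \|\vx_i\|\,\|\vx_s - \vx_j\|}{\sqrt{d_h}}\right).
\]
This gap grows in the product $\sigma_1\alpha$. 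It can be made large either by increasing $\sigma_1$ with norms held fixed, which gives regime (i) and its spectral spike, or by increasing $\alpha \approx \|\vx_s\|$ with $\bm{\Theta}$ held fixed, which gives regime (ii), the massive activation. Adaptivity comes from the factor $\langle\vx_i,\vu_1\rangle$. Queries that carry the flag component satisfy the gap condition, and the others do not, so the gate switches on only when the flag is present.

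The main obstacle is that the claim is semi-formal. "Scales with" and "must significantly exceed" both need interpretation, and the two regimes are not mutually exclusive, because only the product $\sigma_1\|\vx_s\|$ matters. I would state the result as a sufficient condition, $\sigma_1\|\vx_s\|\,\langle \vx_i,\vu_1\rangle/\sqrt{d_h} \gtrsim \log(n/\epsilon)$ plus the competing terms, together with a matching necessary condition obtained from the upper bound: $\|\bm{\Theta}\|_2\|\vx_s\| \ge \sqrt{d_h}\,\log(n/\epsilon)/(2\max\|\vx_i\|)$, assuming bounded non-sink norms. With bounded query norms, the necessary condition forces at least one of the two factors to be large, and that dichotomy is the statement of the lemma.
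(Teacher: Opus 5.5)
Your proposal is correct, and it proves the lemma by a more general route than the paper's.

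The paper's proof is purely constructive. It fixes a rank-1 symmetric interaction $\bm{\Theta}=\sigma\vu\vu^\top$ and a sink aligned with the flag, $\vx_s=\gamma_s\vu$. With $\vx_i=\lambda_i\vu+\vx_\perp$, the logit is then exactly $S_{is}=\lambda_i\sigma\gamma_s/\sqrt{d_h}$. The margin requirement becomes $\sigma\gamma_s\gg M\sqrt{d_h}/\lambda_i$, and the two regimes are read off as the two ways of making this product large.

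You do three things differently:
\begin{itemize}
\item You bound $|S_{is}|\le\|\vx_i\|\,\|\bm{\Theta}\|_2\,\|\vx_s\|/\sqrt{d_h}$ for arbitrary $\bm{\Theta}$. This turns ``scales with'' into a genuine necessary condition.
\item You recover the paper's construction as the tight case via the SVD. In doing so you correctly separate the left singular vector $\vu_1$ (query/flag side) from the right singular vector $\vw_1$ (sink side), which the symmetric construction hides.
\item You quantify the margin as $\log\bigl((n-1)(1-\epsilon)/\epsilon\bigr)$ and treat the competing logits explicitly.
\end{itemize}

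The last point is where your version adds real content. In the paper's construction the competitors $S_{ij}=\lambda_i\sigma\langle\vu,\vx_j\rangle/\sqrt{d_h}$ also scale with $\sigma$. Raising $\sigma$ therefore only helps if $\gamma_s$ exceeds the other tokens' flag components, which is exactly your $\sigma_1(\alpha-\beta_j)$ factor. The paper's implication simply drops $\max_{j\neq s}S_{ij}$. The paper's route buys brevity and an exact closed-form logit exhibiting both regimes; yours buys a two-sided (necessary and sufficient) statement and an honest account of the gap.

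Two small tightenings are worth making.
\begin{itemize}
\item Write the error term as $\sigma_2\|\vx_i\|\,\|\vr_s-\vr_j\|/\sqrt{d_h}$ with $\vr_s,\vr_j\perp\vw_1$, which is what the identity actually gives. With $\|\vx_s-\vx_j\|$ instead, the error in regime (ii) grows with $\alpha$ at the same rate as the main term. Dominance would then require a spectral gap $\sigma_1\langle\vx_i,\vu_1\rangle>\sigma_2\|\vx_i\|$.
\item In your necessary condition, the factor $2$ presumes $\|\vx_j\|\le\|\vx_s\|$ for the competitor used. The $\log(n/\epsilon)$ rate needs Jensen applied to the average gap. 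Individually, $\A_{is}\ge 1-\epsilon$ only forces each gap to exceed $\log\bigl((1-\epsilon)/\epsilon\bigr)$, though that already suffices for the dichotomy.
\end{itemize}
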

See Appendix~\ref{app:adaptivity_proof} for a constructive proof of both regimes. This duality could offer a theoretical explanation for the ``massive activations'' often observed in outlier tokens \cite{darcet2023vision}: they might be the energetic cost of enforcing a reliable \nop~gate without distorting the weights.
%%%%

\paragraph{Toy Model of Adaptive \nop.}
To isolate the mechanics of the adaptive \nop, we train a single-layer attention head on what could be summarized as a ``conditional damping'' task. The input is a sequence of tokens $\vx_i \sim \mathcal{N}(\bm{0}, \bm{I}_d) \in \R^d$. The objective depends on the alignment between the token and a fixed gate vector $\vgate$. If $\langle \vx_i, \vgate \rangle > 0$, the target is damped: $\vy_i = \gamma \cdot \vx_i$ (where $\gamma = 0$ in the default experiment, is the \nop~factor). Otherwise, the target preserves the input: $\vy_i = \vx_i$.
The attention mechanism produces a convex combination of values (geometrically $\vo \subseteq \mathrm{conv}(\bm{V})$), so outputting $\approx \bm{0}$ for the positive-alignment tokens requires the head to use one of the two mechanisms explained in Lemma \ref{lem:adaptivity} (cancellation effect or null value vector). Conversely, to output $\vx_i$ when $\langle \vx_i, \vgate \rangle < 0$, the head acts as an inductive head. %\andy{Should this be changed to Lemma 1?}. 

\paragraph{Results and Signatures.}
We analyze the models trained on this toy model to understand how and why the \nop~sinks emerge. We identify four key signatures of this mechanism:
\textbf{\i{i} SGD Optimization Converges to Sink Solutions.}
Our primary finding is that standard gradient descent consistently converges to a sink-based configuration to solve the adaptive \nop~task. The complex dispersed attention pattern to cancel out contributions is not found by any model and the model consistently allocates a specific token $s$ to act as a null-reference. As shown in~\cref{fig:no_op_signatures}A, the value vector $\vv_s$ at this position collapses to a near-zero norm, allowing the head to effectively ``switch off'' its output by routing the attention of flagged tokens to $s$.
\textbf{\i{ii} Spectral Signature of Gating.} The decision of when to route to the sink leaves a distinct trace in the weights.~\cref{fig:no_op_signatures}B reveals that models with attention sinks exhibit a large dominant singular value in the interaction matrix $\bm{W}_Q \bm{W}_K^\top$. As derived in Lemma~\ref{lem:adaptivity}, this spectral spike acts as a learned ``flag detector,'' amplifying the projection of specific tokens to saturate the softmax and force $\A_{is} = 1$.
% \fel{add weight decay experiment to explain that weiht decay impact wether solution 1 or 2 will be found}
%
We further study how sink solutions depend on the frequency/strength of the NOP condition (entropy and damping) in Appendix ~\ref{app:toy_nop}.

\paragraph{Summary.}
The \nop~hypothesis is a compelling explanation for one class of attention sinks. It suggests that these artifacts are the sign of a real algorithm: a dedicated zero token ($||\vv_s|| \approx 0$) coupled with a gating mechanism (high spectral norm or massive activation). This mechanism allows heads to be conditionally active, effectively solving the softmax constraint. However, this explains only sinks that suppress information. Do sinks ever \textit{propagate} information? We now turn to our second hypothesis.

\begin{figure*}[t]
\vspace{0mm}
\centering
\includegraphics[width=0.9\linewidth]{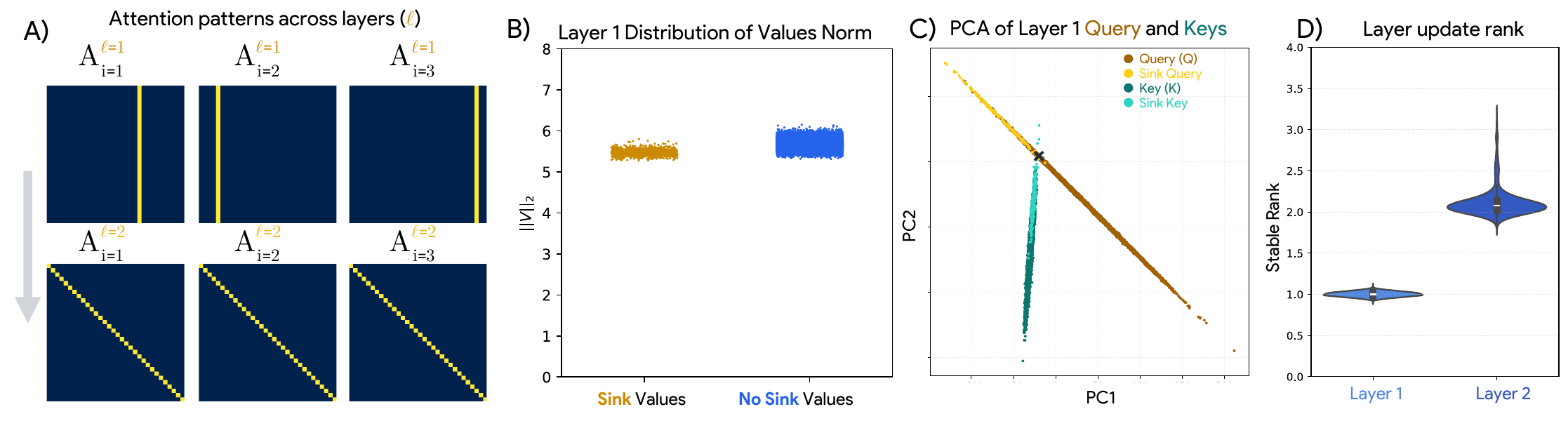}
%\caption{\textbf{(A) Emergent Modularity.} Although a multi-layer network could theoretically distribute broadcasting across depths (e.g., partial mixing), SGD consistently converges to a sharp, modular solution. Layer 1 spontaneously emerges as a dedicated global hub (vertical stripe), while Layer 2 defaults to identity, suggesting that sink broadcasting is a stable, natural primitive found by optimization. \textbf{(B) Information Transfer.} Unlike \nop~sinks which suppress output, broadcast sinks maintain high value norms comparable to content tokens, indicating they actively write a payload to the residual stream. \textbf{(C) Geometry of Gating.} The model learns a low-dim projection for Query (cyan) that sits in the negative orthant of this subspace. To trigger attention, the Sink Key (gold) also moves to the negative orthant, yielding a large positive product. Content Keys (brown) remain in the positive orthant, resulting in suppression. \textbf{(D) Low-Rank Updates.} The effective rank of the update matrix confirms~\cref{cor:rank1}: the broadcast layer update (Layer 1) is strictly Rank-1, while the processing layer (Layer 2) retains high rank.}
\caption{
\textbf{Broadcast sink signatures.}
A two-layer model trained on global broadcast learns a modular solution:
\textbf{(A)} Layer 1 forms a broadcast hub while Layer 2 remains identity-like;
\textbf{(B)} sink values retain content-scale norms;
\textbf{(C)} query-key geometry selects the source token; and
\textbf{(D)} the broadcast update is rank-1.
}
\vspace{-3mm}
\label{fig:broadcast_details}
\end{figure*}

\subsection{Hypothesis 2: Sink for Broadcasting.}
\label{sec:broadcast}

\cref{hyp:broadcast} posits that sinks emerge from tokens that serve a functional role: aggregating global information and broadcasting it to other positions. In bidirectional attention, any token can gather information from and distribute it to all other tokens. A broadcast sink exploits this by accumulating a global representation and making it available sequence-wide.
\paragraph{Broadcast Signatures.}
Under this hypothesis, high attention mass toward the sink reflects a meaningful computational pattern: the sink acts as a communication hub that distributes global or aggregated features across spatially distant patches. This leads to two measurable consequences.
First, because the sink token's value vector $\vv_s$ is added to many positions simultaneously, the resulting update matrix becomes low-rank.
\begin{lemma}[\textbf{Rank-1 update}]\label{cor:rank1}
Suppose a single sink position $s$ dominates attention across many queries, so that the attention output satisfies $\mO=\A\mV$ with $\A_{is}\approx 1$ for those queries and $\vv_s$ carries non-negligible information. Then the output matrix $\mO \in \mathbb{R}^{n \times d_h}$ exhibits approximate rank-1 structure: $\mO \approx \A_s \vv_s^\top$.
%\begin{equation}
    %\mO \approx \A_s \vv_s^\top.
%\end{equation}
%where $\A_s = (\A_{1s}, \ldots, \A_{ns})^\top$ is the $s$-th column of the attention matrix. The dominant singular value of $\mO$ scales with $\|\vv_s\|$, and the corresponding right singular vector aligns with $\vv_s$.
\end{lemma}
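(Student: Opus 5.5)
We make the statement quantitative using the $\epsilon$-sink notion of Definition~\ref{def:epsilon_sink}. We then show that $\mO$ differs from the rank-1 matrix $\A_s \vv_s^\top$ by a perturbation controlled by $\epsilon$. Here $\A_s = (\A_{1s},\ldots,\A_{ns})^\top$ is the $s$-th column of $\mA$. Let $\mathcal{I}$ be the set of queries for which $s$ is an $\epsilon$-sink, so that $\A_{is}\ge 1-\epsilon$ for $i\in\mathcal{I}$.

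The first step is a row-wise decomposition. For every row,
\[
\vo_i \;=\; \A_{is}\vv_s + \sum_{j\neq s}\A_{ij}\vv_j \;=\; \A_{is}\vv_s + \vr_i .
\]
Row-stochasticity gives $\sum_{j\ne s}\A_{ij} = 1-\A_{is}\le\epsilon$ for $i\in\mathcal{I}$. Applying the triangle inequality then yields
\[
\|\vr_i\| \le \epsilon \max_{j}\|\vv_j\| =: \epsilon V_{\max}.
\]
Stacking these rows gives $\mO = \A_s\vv_s^\top + \mR$, where the $\mathcal{I}$-rows of $\mR$ have norm at most $\epsilon V_{\max}$. Consequently $\|\mR_{\mathcal{I}}\|_F \le \epsilon V_{\max}\sqrt{|\mathcal{I}|}$. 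If the statement is read over all queries (the default $\mathcal{I}=\{1,\ldots,n\}$), this already bounds $\|\mO - \A_s\vv_s^\top\|_F$. Otherwise we restrict the claim to the submatrix $\mO_{\mathcal{I}}$, or we treat the remaining $n-|\mathcal{I}|$ rows as an additional bounded-rank perturbation.

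The second step converts this into a statement about rank and singular values. By Weyl's inequality, $\sigma_k(\mO)\le \|\mR\|_2 \le \epsilon V_{\max}\sqrt{n}$ for all $k\ge 2$. In the other direction,
\[
\sigma_1(\mO)\ge \|\A_s\|\,\|\vv_s\| - \|\mR\|_2 \ge (1-\epsilon)\sqrt{n}\,\|\vv_s\| - \epsilon V_{\max}\sqrt{n}.
\]
This is where the hypothesis that $\vv_s$ "carries non-negligible information" enters. We formalize it as $\|\vv_s\|\ge c\,V_{\max}$ for some constant $c>0$ that does not depend on $\epsilon$. The ratio $\sigma_2/\sigma_1$ is then $O(\epsilon/c)$, so $\mO$ is approximately rank-1. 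To get the alignment of the leading right singular vector with $\vv_s/\|\vv_s\|$, we apply Wedin's $\sin\Theta$ theorem. Its angle bound is $O(\|\mR\|_2/(\sigma_1(\A_s\vv_s^\top)))=O(\epsilon/c)$.

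The main obstacle is conceptual rather than computational: the lemma is stated informally. The key step is choosing the non-degeneracy assumption on $\vv_s$ relative to the other values. Without it the statement fails, and this is exactly the \nop\ regime of Lemma~\ref{lem:noop_unique}, where $\|\vv_s\|\approx 0$ and the residual $\mR$ dominates. We would state the result with explicit constants and note that at $\epsilon=0$ the identity $\mO_{\mathcal{I}}=\mathbf{1}\vv_s^\top$ holds exactly.
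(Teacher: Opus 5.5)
Your proposal is correct, and its first step is the paper's proof. The paper also splits $\A = \A_s\ve_s^\top + \mE$, uses row-stochasticity to bound each row sum of $\mE$ by $\epsilon$, and concludes that every row of $\mO - \A_s\vv_s^\top = \mE\mV$ has norm at most $\epsilon\max_j\|\vv_j\|$.

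The difference is in the second step. The paper stops at the approximant. It reads off the SVD of $\A_s\vv_s^\top$, giving $\sigma_1 = \|\A_s\|\,\|\vv_s\| \ge \sqrt{n}(1-\epsilon)\|\vv_s\|$ and a right singular vector equal to $\vv_s/\|\vv_s\|$. These spectral claims are about the rank-1 term, not about $\mO$, and ``non-negligible'' is never quantified. You instead convert the row-wise bound into $\|\mE\mV\|_2 \le \|\mE\mV\|_F \le \epsilon V_{\max}\sqrt{n}$. You then use Weyl and Wedin, under $\|\vv_s\| \ge cV_{\max}$, to show that $\sigma_2(\mO)/\sigma_1(\mO)$ is $O(\epsilon/c)$, and likewise the angle between the top right singular vector of $\mO$ and $\vv_s$.

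What this buys is a genuine rank statement about $\mO$ itself. It also makes explicit why the lemma degenerates in the NOP regime: the absolute error $\epsilon V_{\max}$ persists, but relative rank-1 structure is lost. The paper's argument is shorter but only gives an absolute-error approximation. Your row-wise $\ell_2$ estimate is also cleaner than the paper's $\|\mE\mV\|_\infty \le \|\mE\|_\infty\max_j\|\vv_j\|$, which mixes the induced $\infty$-norm with $\ell_2$ row norms.

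One caveat concerns your aside about a proper subset $\mathcal{I}$. Treating the remaining rows as a bounded-rank perturbation only shows that $\mO$ is approximately of rank $1+(n-|\mathcal{I}|)$. Each off-$\mathcal{I}$ row can deviate from $\A_{is}\vv_s$ by up to $V_{\max}$. So approximate rank-1 structure needs the norm of those rows to be small: roughly $(n-|\mathcal{I}|)/|\mathcal{I}|$ small compared with $c^2$, not merely their rank to be bounded. The paper avoids this issue by assuming $\mathcal{I}=\{1,\ldots,n\}$.
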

Second, this repeated addition of the same vector $\vv_s$ to multiple tokens has a geometric effect on the latent space: it introduces a shared component across tokens, increasing their alignment. %it pulls tokens closer together, reducing their diversity.
\begin{lemma}[\textbf{Increased Token Similarity}]\label{cor:similarity}
Suppose a broadcast sink at position $s$ dominates attention in a layer and let $\bar{\vx}^{\ell} = \frac{1}{n}\sum_i \vx_i^{\ell}$ denote the mean token representation at layer $\ell$. Then the variance of token representations after a broadcast-dominated attention block satisfies:
\begin{align*}
    \frac{1}{n}\sum_i \|\vx_i^{\ell+1} - \bar{\vx}^{\ell+1}\|^2 \leq \frac{1}{n}\sum_i &\|\vx_i^{\ell} - \bar{\vx}^{\ell}\|^2 + \mathcal{O}(\|\vv_s\|^2 \cdot \mathbb{V}(\A_s)).
\end{align*}
%where $\mathbb{V}(\A_s)$ denotes the variance of attention weights to the sink. When attention is uniform to the sink, the additive term vanishes and representations contract toward a shared direction.
\end{lemma}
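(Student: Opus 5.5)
We model the broadcast-dominated block in the same idealized way as Lemma~\ref{cor:rank1}. The residual update is
\[
\vx_i^{\ell+1}=\vx_i^{\ell}+\vo_i,\qquad \vo_i=\A_{is}\vv_s+\vr_i,\qquad \vr_i=\sum_{j\neq s}\A_{ij}\vv_j .
\]
Broadcast dominance means $\|\vr_i\|$ is negligible, of order $\epsilon\max_j\|\vv_j\|$ in the $\epsilon$-sink sense of Definition~\ref{def:epsilon_sink}. We first treat the exact case $\vr_i=0$, so $\vo_i=\A_{is}\vv_s$. The residual will then be absorbed into the $\mathcal{O}(\cdot)$ term, or stated as an additive $\mathcal{O}(\epsilon)$ correction.

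\textbf{Step 1: exact centering identity.} Write $\bar a=\frac1n\sum_i\A_{is}$ and $\mathbb{V}(\A_s)=\frac1n\sum_i(\A_{is}-\bar a)^2$. Averaging the update gives $\bar{\vx}^{\ell+1}=\bar{\vx}^{\ell}+\bar a\,\vv_s$. Setting $\vc_i=\vx_i^{\ell}-\bar{\vx}^{\ell}$, we get
\[
\vx_i^{\ell+1}-\bar{\vx}^{\ell+1}=\vc_i+(\A_{is}-\bar a)\vv_s .
\]
The key point here is that the common part $\bar a\vv_s$ of the broadcast cancels exactly in the centered representation.

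\textbf{Step 2: expand the square.} Averaging over $i$ yields
\[
\tfrac1n\sum_i\|\vc_i\|^2+\tfrac{2}{n}\sum_i(\A_{is}-\bar a)\langle \vc_i,\vv_s\rangle+\|\vv_s\|^2\,\mathbb{V}(\A_s).
\]
The first and last terms already match the claimed bound.

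\textbf{Step 3: the cross term (main obstacle).} The cross term can be positive, and bounding it by Cauchy--Schwarz gives $2\sqrt{\mathrm{Var}^{\ell}}\,\|\vv_s\|\sqrt{\mathbb{V}(\A_s)}$. That is first order in $\|\vv_s\|\sqrt{\mathbb{V}(\A_s)}$, not of the claimed order $\|\vv_s\|^2\mathbb{V}(\A_s)$. I will handle this in two ways.

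\emph{(a) Exact statement.} Use $2ab\le \eta a^2+b^2/\eta$ to obtain
\[
\mathrm{Var}^{\ell+1}\le(1+\eta)\mathrm{Var}^{\ell}+(1+1/\eta)\|\vv_s\|^2\mathbb{V}(\A_s).
\]
Taking $\eta\to 0$ is not free, so the honest form of the lemma carries a multiplicative slack on $\mathrm{Var}^{\ell}$.

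\emph{(b) Reading under which the stated inequality holds as written.} Note that $\A_{is}$ depends on $\vx_i$ only through the query. In the broadcast regime the sink logit is saturated, as in Lemma~\ref{lem:adaptivity}, so $\A_{is}$ varies only at order $\epsilon$. Alternatively, assume the fluctuation of $\A_{is}$ is uncorrelated with $\langle \vc_i,\vv_s\rangle$; then the cross term vanishes. Under either condition the displayed inequality holds, and with uniform sink attention ($\mathbb{V}(\A_s)=0$) it becomes an equality.

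I will present (b) as the intended reading and remark that (a) gives an unconditional version.

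\textbf{Step 4: contraction remark.} In the post-LayerNorm or relative sense, the shared component $\bar a\vv_s$ raises $\|\bar{\vx}^{\ell+1}\|$, roughly by $\|\vv_s\|^2\bar a^2$ plus an alignment term, while the spread stays fixed up to the $\mathbb{V}(\A_s)$ term. Hence cosine similarity between tokens increases, which explains the lemma's title. Finally, reinsert $\vr_i$: it adds at most $\mathcal{O}(\epsilon\max_j\|\vv_j\|\cdot(\sqrt{\mathrm{Var}^{\ell}}+\epsilon))$ to the bound, and this vanishes for a perfect sink.
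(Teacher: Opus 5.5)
Your proposal follows the paper's proof essentially step for step. You use the same split $\vo_i=\A_{is}\vv_s+\vr_i$, the same cancellation of the common part $\bar a\,\vv_s$ after centering, the same expansion of the square, and the same Cauchy--Schwarz bound $2\sqrt{\mathbb{V}(\A_s)}\,\sqrt{\mathrm{Var}^{\ell}}\,\|\vv_s\|$ on the cross term.

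Your Step 3 correctly pinpoints where the paper is loose. Its ``combining these bounds'' step silently folds this first-order quantity into $\mathcal{O}(\|\vv_s\|^2\,\mathbb{V}(\A_s))$, and that fails in general. For instance, in your exact model with $\vr_i=0$, take $\vx_{1,2}^{\ell}-\bar{\vx}^{\ell}=\pm\vv_s$ and $\A_{1s}-\bar a=-(\A_{2s}-\bar a)=t$. This gives $\mathrm{Var}^{\ell+1}=(1+t)^2\,\mathrm{Var}^{\ell}$, an increase linear in $t=\sqrt{\mathbb{V}(\A_s)}$. So your Young's-inequality version (a) is the honest unconditional statement.

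One refinement to (b): saturation alone only gives $\sqrt{\mathbb{V}(\A_s)}\le\epsilon$. The cross term then becomes $\mathcal{O}(\epsilon\,\|\vv_s\|\sqrt{\mathrm{Var}^{\ell}})$ rather than vanishing. Under saturation the bound therefore holds only with the additive $\mathcal{O}(\epsilon)$ term, which the paper's final display carries but the lemma statement omits. Only your decorrelation assumption, with $\vr_i=0$, yields the inequality exactly as written.
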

See Appendix~\ref{app:hyp}. This connects directly to recent findings by~\cite{jacobs2025block, saada2024mind, wang2022anti} regarding patch representations in standard ViTs that tend to converge toward their mean.~\cref{cor:similarity} suggests that 
% \andy{Hasn't this convergence behavior also been described mathematically in other work? Like oversmoothing in GNN literature (or this \url{https://openreview.net/pdf?id=6FVeg8EMMz})} \lukas{Yes, and this has also been studied in transformers (I'm citing three of those papers above). To me the main point here is less that this convergence behavior exists, but that the model chooses to 'speed up' convergence by learning a particular attention map (compare this with Figure 7, which shows that this happens mainly in the earlier layers).}
%
if this hypothesis holds, we should be able to induce sink behavior purely by forcing a model to solve a task requiring global information sharing.

\paragraph{Toy Model of Broadcasting.} 
To validate the broadcast hypothesis, we design a task that strictly requires global information sharing. We train a 2-layer Transformer on a ``global  broadcast'' task (details in Appendix~\ref{app:toy_broadcast}).
The input is a sequence of random tokens $\vx_i \sim \mathcal{N}(\bm{0}, \bm{I}_d)$. 

The objective requires every token in the sequence to update its representation by adding a rotated version of this source token:
\begin{equation}
    \vy_i = \vx_i + \gamma \bm{R} \vx_{j^\star},
\end{equation}
where $\bm{R} \in \R^{d \times d}$ is a fixed orthogonal rotation matrix and $\gamma$ is a scalar controlling the broadcast strength.
Unlike the \nop~task, where the goal is to suppress updates, here the goal is to actively distribute specific information ($\vx_{j^\star}$) from one position to all $N$ positions. 

\paragraph{Results and Signatures.}
We analyze the models trained on the Global Broadcast task to characterize the mechanisms of active information sharing. We identify four distinct signatures that we will now describe.
\textbf{\i{i} Emergence of Modular Broadcast Layers.}
While a multi-layer network could theoretically distribute the broadcasting operation across depth (e.g., partial mixing in every layer), we find that optimization consistently converges to a sharp, modular solution. As shown in \cref{fig:broadcast_details}A, the model allocates a single layer to act as a dedicated broadcast (here, Layer 1) where attention concentrates entirely on the source token $j^*$. The subsequent layer (Layer 2) defaults to an identity-like pattern (diagonal), preserving the broadcasted signal without further mixing. This suggests that sinks are not random artifacts, but represent stable, specialized algorithmic primitives that arise naturally under SGD for global communication.
\textbf{\i{ii} Information Transfer.}
A critical distinction from the \nop~mechanism lies in the value vectors. Recall that \nop~sinks suppress output by learning near-zero value norms. Here, \cref{fig:broadcast_details}B shows that the distribution of value norms for sink tokens is indistinguishable from that of non-sink content tokens (both centered around $\|\vv\| \approx 5.5$). This indicates the sink is actively writing a significant payload to the residual stream, consistent with Hypothesis~\ref{hyp:broadcast}.
%
%%%%
\textbf{\i{iii} Geometry of Gating.}
The PCA projection of the Query-Key space (\cref{fig:broadcast_details}C) reveals precisely how the model implements the conditional gating. 
First, all tokens collapse their query vectors into a one-dimensional query subspace (in the negative orthant of that subspace). Second, the selection is enforced via a double-negative alignment. The Sink Key (gold) is pushed deep into the negative half-space, aligning with the Universal Query (also negative) to produce a large positive logit. 
This collapse of queries and keys onto a single line implies that the interaction matrix $\bm{W}_Q \bm{W}_K^\top$ itself exhibits a sharp decay in singular values, effectively operating as a rank-1 projector that isolates the ``broadcast'' direction from all other feature dimensions.
%%%%
%
\textbf{\i{iv} Low-Rank Updates.}
Finally, we validate the spectral signature predicted by Lemma~\ref{cor:rank1}. \cref{fig:broadcast_details}D measures the stable rank~\cite{neyshabur2015norm,arora2018stronger} of the attention update $\mathrm{Sr}(\vx_{i}^{\ell+1} - \vx_{i}^{\ell})$ across layers. The broadcasting layer (Layer 1) exhibits a stable rank $\approx 1$, confirming that it is adding a single shared vector to the entire sequence. In contrast, the processing layer (Layer 2) maintains a higher effective rank, reflecting diverse, position-specific updates. We believe that this low-rank signature can serve as a robust diagnostic for detecting broadcast-like behavior in deep networks.

\begin{figure*}[t!]
\centering
\vspace{0mm}
\includegraphics[width=0.9\linewidth]{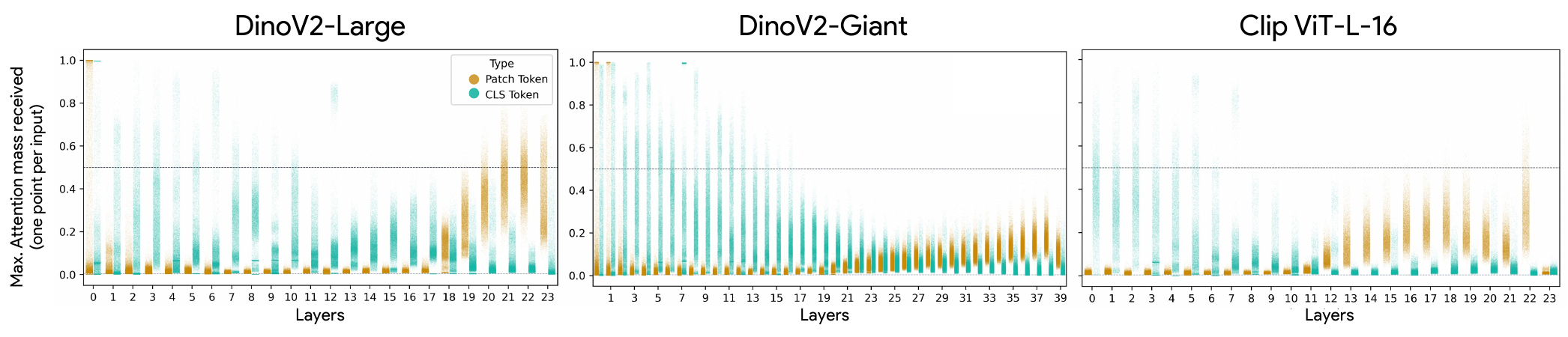}
%\vspace{-4mm}
\caption{\textbf{Sink token transition across layers.} DINOv2 (Large and Giant) and OpenCLIP-L-16 all exhibit a handoff pattern: \texttt{[CLS]} serves as the sink in early layers but yields to patch tokens in later layers. This suggests the model protects \texttt{[CLS]} as it saturates with semantic content.}
\vspace{-4mm}
\label{fig:sink_distribution}
\end{figure*}

\paragraph{Two Algorithms, Two Signatures.}
As elucidated by the preceding targeted analysis, the two algorithmic hypotheses for the role of attention sinks yield distinct measurable signatures in real models. Adaptive no-op sinks are characterized by negligible value norms at the sink position and representations that remain approximately unchanged through the attention block. Broadcast sinks, in contrast, exhibit non-zero value norms, induce low-rank structure in the attention output matrix, and increase cosine similarity across token representations as they distribute shared information.
Taken together, these controlled analyses provide stylized support for the empirical story and yield operational diagnostics for distinguishing the two regimes. In the following section, we use these diagnostics to separate sink behavior in large-scale vision models and illuminate the role of attention sinks more broadly.

% \fel{Liaisons avec la suite: Nous avons désormais etudier, qualifier et preciser les hypothese posées en section precedentes. Nous savons que chacunes des hypothse sous tend un algorithme differents, avec des signatures différentes. Nous allons desormais mettre en pratique ces analyses sur les modèles large scale de vision pour eclairer le phenomene des sinks.}

% \fel{in this section, claerly state two hypothesis adaptive no-op and broadcast, and for each of them will describe a toy model study it's signature and characs (theoretical nd empirical). At the end make summary of those signature, will help us distinguish after real model why when and do which degree any of those sink happen}

% \lukas{Things to improve:
%\begin{enumerate}
    %\item Split this section into two: one for preliminaries and related Works and one for rigorously preseting the two hypotheses.
    %\item Make "Two Algorithms, Two Signatures" its own subsection.
    %\item Move Figure 2 ("High-entropy gating makes sinks appear faster") to Section 3/ later
    %\item Include Figure on the signature of the Broadcasting Algorithm (reduction in representational diversity or classification accuracy when using the sink token)
%\end{enumerate}}

\section{Empirical Evidence and Phenomenology}
\label{sec:evidence}

Having established the theoretical signatures of the Adaptive~\nop{} and Broadcast mechanisms, we now examine their prevalence in large-scale Vision Transformers.

\paragraph{Phase Transition of Sink Tokens.}
We investigate which tokens serve as sinks across layers and whether this varies by architecture. Figure~\ref{fig:sink_distribution} shows the attention inflow (the token receiving the highest attention mass per input) for DINOv2-L, DINOv2-G \cite{oquab2023dinov2}, and OpenCLIP-L-16 \cite{ilharco2021openclip}.
All three models exhibit a similar transition. In early layers, the \texttt{[CLS]} token acts as the primary sink. In deeper layers, this role shifts to patch tokens: \texttt{[CLS]} inflow decreases while patch sinks emerge.
%
%A natural explanation is that \texttt{[CLS]} is initially ``empty'' and can safely absorb attention. As it accumulates global semantic information, it becomes functionally critical. To preserve this signal, the model offloads sink duty to less sensitive patch positions. \andy{This sentence bothers me a bit because it seems to ignore the entirety of the 'two algorithms' argument of the paper (it implies some single purpose). I would either remove it or quantify it with evidence from Figure \ref{fig:dino_giant_phenom}.}
%
We now ask if sink behavior is distributed across all attention heads, or concentrated in specialized ones?

\paragraph{Head Specialization.}
We map the frequency of sink behavior across layers and heads in DINOv2-L. Figure~\ref{fig:head_specificity} reveals a sparse, vertical structure: certain heads act as sinks for nearly $80\%$ of inputs, while adjacent heads never do. This indicates that sink mechanisms are head specific.
Given this specialization, we go back to our original main question: do these dedicated sink heads implement \nop, broadcast, or both?

%\begin{figure}[t!]
%\centering
%\includegraphics[width=0.95\linewidth]{assets/dino_giant_signature.pdf}
%\caption{\textbf{Dual Phenomenology in DinoV2-G.} Sinks cluster into \nop~sinks (low norm, bottom) and broadcast sink (moderate to high norm, $\approx$rank-1 update, left). Both regimes coexist within the same model.}
%\label{fig:dino_giant_phenom}
%\end{figure}

%%% FIGURE 7 %%%

%\begin{figure*}[t!]
%\centering
%\includegraphics[width=0.9\linewidth]{assets/register_details.pdf}
%\vspace{-2mm}
%\caption{
%\textbf{(Left) Registers absorb sink behavior.} In DINOv2-G + Reg.(4), register tokens (pink) capture nearly all attention mass across layers, displacing patch and \texttt{[CLS]} sinks.
%\textbf{(Right) Registers inherit both regimes.} Register sinks cluster into the same two phenotypes: \nop~(low norm, majority) and broadcast (high norm, rank-1). Registers are repurposed for both mechanisms.
%}
%\label{fig:register_dino}
%\end{figure*}

\begin{wrapfigure}{r}{0.4\linewidth}
\centering
\vspace{-6mm}
\includegraphics[width=0.95\linewidth]{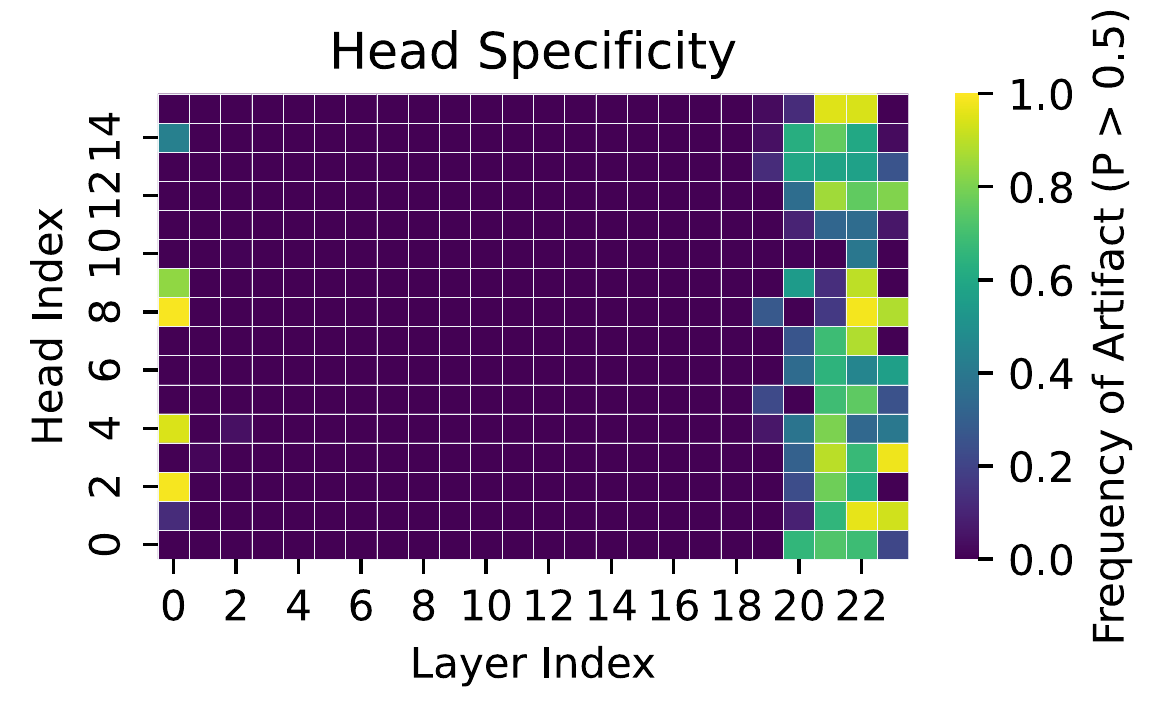}
\vspace{-3mm}
\caption{\textbf{Head specialization in DINOv2-L.} The entropy of values per layer shows that sink behavior is head-specific.}
\label{fig:head_specificity}
\vspace{-5mm}
\end{wrapfigure}

\paragraph{Distinct Functional Regimes.}
To answer this, we project every detected sink in DINOv2-G onto two diagnostic axes: the value norm ratio $\|\vv_s\| / \|\vv_{i}\|$ and the stable rank of the attention update. Our main new empirical finding is not the existence of either regime in isolation, but that these regimes can be separated and shown to coexist within the same pretrained ViT. Figure~\ref{fig:dino_vitl_sinks} reveals two distinct modes.

The first cluster (early-to-mid layers) exhibits strongly suppressed value norms ($ < 0.2 \times$ the baseline norm), consistent with Adaptive~\nop: these heads nullify their contribution to the residual stream. The second cluster (deeper layers) shows standard value norms paired with strict rank-1 updates, consistent with Broadcast sink: these heads actively distribute shared information across the sequence. This dual phenomenology can also be found in other ViTs (see Appendix~\ref{app:additional_results}). %\andy{Maybe put this before the phase transitions section so that the role of the sinks is clear before we see how they change?}
Finally, we study one of the most commonly used mitigation strategies for attention sinks in vision transformers -- the addition of register tokens. Leveraging our new methodology, we thus ask: do registers fill the role of both mechanisms -- or only one?
\paragraph{Validation via Registers.}
We analyze DINOv2-G trained with 4 register tokens. If registers address sink behavior, they should absorb the attention mass currently forced onto patch tokens. Figure~\ref{fig:register_dino}~(Left) confirms this: register tokens capture nearly all sink mass across layers, displacing both patch and \texttt{[CLS]} sinks.
However, registers do not serve a single function. Figure~\ref{fig:register_dino}~(Right) projects register sinks onto our diagnostic axes and reveals the same bimodal structure: a majority cluster with suppressed value norms ($< 0.2$) implementing~\nop, and a smaller cluster with $\approx 1$ norm and rank-1 updates implementing broadcast.
This yields two insights. First, the ``artifacts'' in standard ViTs are genuine computational primitives -- both~\nop~and broadcast persist even when dedicated tokens are provided. Second, while registers were designed with broadcasting in mind~\citep{darcet2023vision}, the model repurposes them for~\nop~as well. This motivates the combination of registers with gating: registers provide a natural substrate for broadcasting, while gating offers a direct architectural implementation for~\nop. We explore this in the next section.

%Using registers does not eliminate the NOP mechanism, it merely  shifts sink behavior away from semantically meaningful patches and onto the registers.

%Registers thus relocate sink behavior away from semantically meaningful patches, but they do not eliminate the~\nop~mechanism, they host it. \andy{This sentence (<--) is also a bit weird, sounds like chat-gpt, (sorry), to me it seems like there is no way to 'eliminate' the no-op mechanism?} 

\vspace{0mm}
\section{Sink Mitigation Improves Dense Spatial Representations}
This section explores the synergy between gated attention and register tokens together to mitigate attention sinks. We demonstrate empirically that these two methods are complementary; when combined, they produce better dense spatial representations than either technique used in isolation.

\textbf{Experiments.} We evaluate sink mitigation primarily in ViT-L LeJEPA models \citep{balestriero2025lejepa}, whose latent-predictive objective encourages localized patch representations and is therefore well-suited for testing whether mitigation improves spatial features. We train three random seeds for each mitigation variant: a vanilla baseline, a model with four register tokens, a Gated Attention model (the `G1' variant from \citet{qiu2025gated}), and a hybrid model combining Gated Attention with register tokens. For dense prediction, we freeze each trained backbone and train a single linear segmentation probe on ADE20K and Pascal VOC 2012. As a complementary classification control, we also report ImageNet-1k accuracy from online linear classifiers trained during LeJEPA pretraining. This allows us to compare whether sink mitigation improves patch-level spatial representations without necessarily improving global classification performance. Training configurations and hyperparameter settings are detailed in Appendix~\ref{app:mitigation_exp}.

\begin{wrapfigure}{r}{0.66\linewidth}
\vspace{0mm}
\centering
\includegraphics[width=\linewidth]{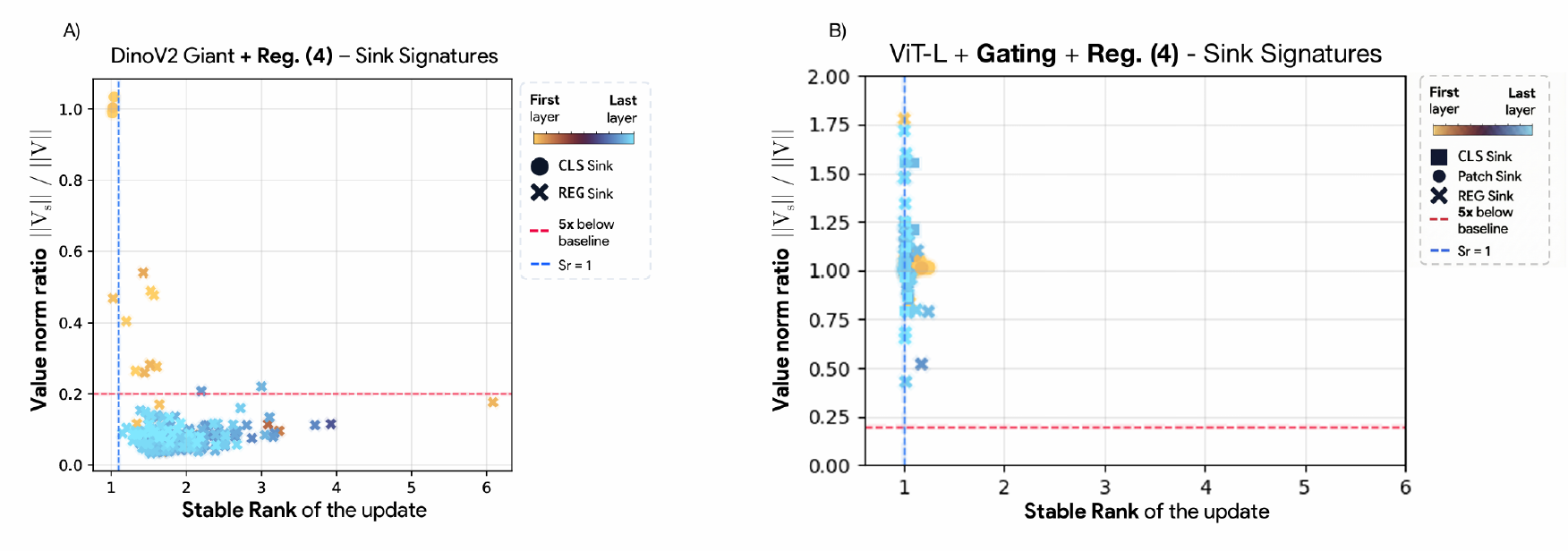}
\vspace{-2mm}
%\caption{\textbf{(Left) Dual Phenomenology in DinoV2-G.} Sinks cluster into \nop~sinks (low norm, bottom) and broadcast sink (moderate to high norm, $\approx$rank-1 update, left). Both regimes coexist within the same model. \textbf{(Right) Gating+register mitigates sinks.} The Gating+register approach in ViT-L successfully eliminates NO-OP sinks and redirects broadcast sinks to the registers.}
\caption{
\textbf{Diagnostics and mitigation.}
\textbf{(Left)} DINOv2-G sinks separate into NOP-like low-value-norm sinks and broadcast-like rank-1 sinks.
\textbf{(Right)} Gating + registers suppresses NOP sinks and redirects broadcast sinks to registers.
}
\vspace{-3mm}
\label{fig:dino_vitl_sinks}
\end{wrapfigure}

\textbf{Results.} Table~\ref{tab:lejepa_full_results} shows that sink mitigation yields consistent gains on dense prediction tasks. The baseline achieves $0.166 \pm 0.006$ on ADE20K and $0.438 \pm 0.016$ on Pascal VOC 2012. All mitigation variants improve over this baseline on both datasets, indicating that reducing sink behavior improves spatial representations. Moreover, combining Gated Attention with register tokens yields the strongest performance, reaching $0.200 \pm 0.006$ on ADE20K and $0.524 \pm 0.013$ on Pascal VOC 2012. Consistent with these results, Figure~\ref{fig:vitl_norm_comparison} shows that both Gated Attention and register tokens reduce CLS and register token norms, with the combined variant also suppressing this sink-associated behavior.

\begin{wraptable}{r}{0.70\textwidth}
\vspace{-4mm}
\small
\centering
\setlength{\tabcolsep}{3pt}
\begin{tabular}{
l
r@{${}\pm{}$}l
r@{${}\pm{}$}l
r@{${}\pm{}$}l
}
\toprule
Variant & \multicolumn{2}{c}{ImageNet} & \multicolumn{2}{c}{ADE20K} & \multicolumn{2}{c}{VOC12} \\
\midrule
Baseline           & $0.694$ & $0.001$  & $0.166$ & $0.006$ & $0.438$ & $0.016$ \\
Registers          & $0.693$ & $0.001$  & $0.187$ & $0.002$ & $0.499$ & $0.001$ \\
Gating             & $\mathbf{0.695}$ & $\mathbf{0.0004}$ & $0.186$ & $0.004$ & $0.480$ & $0.002$ \\
Gating + Registers & $0.694$ & $0.001$  & $\mathbf{0.200}$ & $\mathbf{0.006}$ & $\mathbf{0.524}$ & $\mathbf{0.013}$ \\
\bottomrule
\end{tabular}
\vspace{-1mm}
\caption{
\textbf{Linear probing performance.}
Mean $\pm$ standard deviation of the mIoU over three LeJEPA backbones; dense probes use frozen-backbone segmentation heads.
}
% \caption{Linear probing performance. We report mean and standard deviation over three LeJEPA backbones. ImageNet-1k uses online linear probes during pretraining; ADE20K and VOC12 use frozen-backbone linear segmentation probes.}
\label{tab:lejepa_full_results}
\vspace{-4mm}
\end{wraptable}

By contrast, ImageNet-1k online linear classification accuracy remains tightly clustered across variants, with all models achieving approximately $69\%$ accuracy. This supports our interpretation: global classification depends primarily on aggregate image representations, whereas sink behavior mainly degrades localized patch-level features. Dense prediction tasks are therefore a more sensitive testbed for evaluating sink mitigation. Overall, these results suggest that Gated Attention and register tokens are complementary, with each improving dense prediction in isolation and their combination yielding the strongest spatial representations.

\vspace{-3mm}
\section{Discussion}
%\vspace{-3mm}

This paper argues that attention sinks in vision transformers should not be treated as a single phenomenon: the same stripe-like attention pattern can implement different algorithms. We analyze two canonical mechanisms: Adaptive \nop, where a head routes attention to a sink to produce an approximately null residual update, and \emph{Broadcast}, where a sink acts as a global workspace that aggregates and redistributes shared information. This framing yields simple diagnostics that connect attention maps to computation: sink value norms identify NOP-like suppression, while the rank structure of the attention-induced update identifies broadcast-like mixing. Minimal synthetic objectives isolate each mechanism, and applying these diagnostics to pretrained ViTs suggests that both occur in practice, with register-like tokens often shifting where sink behavior concentrates rather than uniformly eliminating it. Empirically, combining gating and registers eliminates \nop~sinks, redirects broadcast sinks to registers, and improves dense prediction: ImageNet classification remains largely unchanged, while ADE20K and Pascal VOC 2012 linear probes improve across mitigation variants, with gating + registers performing best. This supports the view that sink mitigation primarily improves patch-level spatial representations rather than global classification accuracy.

Our analysis also has limitations and points to concrete next steps. First, although gated attention improves training dynamics in LLMs~\cite{qiu2025gated}, it remains unclear when analogous benefits arise in ViTs, especially because gated attention and registers do not remove broadcast sinks. Our LeJEPA results suggest that benefits may be clearest when localized patch representations matter, but future work should test whether they persist or grow with model, data, and training scale. Controlled studies should track update norms, sharpness, downstream spatial performance, and the emergence timing of sink-like heads under gating and registers. Second, if some sinks are useful broadcasters, can architecture support broadcasting without inducing sink behavior? More broadly, linking attention computations to the architectural constraints that encourage them may help redesign attention to preserve what helps learning while removing what hinders it.

\section*{Acknowledgements}

This work has been made possible in part by a gift from the Chan Zuckerberg Initiative Foundation to establish the Kempner Institute for the Study of Natural and Artificial Intelligence at Harvard University. L.F. and M.J. are supported by the Kempner Graduate Fellowship at Harvard University. TF and TAK are supported by the Kempner Institute Research Fellowship.

\bibliographystyle{unsrtnat}
\bibliography{main}

%%%%%%%%%%%%%%%%%%%%%%%%%%%%%%%%%%%%%%%%%%%%%%%%%%%%%%%%%%%%

\clearpage

\appendix

\section{Toy Models of Attention Sinks}

\subsection{Toy Model of NOP}
\label{app:toy_nop}

We implement the NOP task from section \ref{sec:noop} using a deliberately minimal single-head Transformer block (no MLP) that mirrors a ViT attention layer with pre-norm residual. Each forward pass applies LayerNorm to the token stream, forms $\mQ,\mK,\mV$ with learned linear maps (no bias), and computes attention weights $\A=\softmax(\mQ\mK^\top/\sqrt{d_h})\in\mathbb{R}^{L\times L}$; the attention branch produces a residual update $\Delta\mO = \mW_O(\A\mV)$, and the block output is $\mO = \mX + \Delta\mO$, where $\mX$ denotes the normalized and position-embedded stream used to form $\mQ,\mK,\mV$. We use model width $d=64$, head dimension $d_h=64$, and sequence length $L=32$, with learned positional embeddings and a learned BOS offset added to token $0$ prior to LayerNorm. Training data are i.i.d.\ Gaussian token sequences $\mX_{\mathrm{in}}\in\mathbb{R}^{B\times L\times d}$ sampled fresh each step. To define the NOP mask, we sample once a fixed unit vector $\vv\in\mathbb{R}^d$ (no gradients) and compute scores $s_i=\langle (\mX_{\mathrm{in}})_i, \vv\rangle$ per token; the mask is $\mathbf{m}_i=\mathbbm{1}[s_i> \lambda]$. The regression target is imposed on the attention branch,
\[
\Delta\vy_i = (1-\mathbf{m}_i)\,(\mX_{\mathrm{in}})_i \;+\; \mathbf{m}_i\,\gamma\,(\mX_{\mathrm{in}})_i,
\]
so that $\gamma=0$ enforces a true NOP condition for the attention head (masked tokens should produce zero residual update, leaving the residual stream unchanged), while $\gamma>0$ makes the task ``partially damped'' rather than fully nulled. We optimize mean-squared error $\|\Delta\mO-\Delta\mY\|_2^2$ with AdamW (learning rate $3\cdot 10^{-3}$, weight decay $10^{-4}$), typically for 30k steps at batch size $B=1024$ (or $B=2048$ for sweep runs), on GPU.

% We implement the NOP task from \S\ref{subsec:nop} using a deliberately minimal single-head Transformer block (no MLP) that mirrors a ViT attention layer with pre-norm residual. Each forward pass applies LayerNorm to the token stream, forms $\mQ,\mK,\mV$ with learned linear maps (no bias), and computes attention weights $\A=\softmax(\mQ\mK^\top/\sqrt{d_h})\in\mathbb{R}^{L\times L}$; the output is a residual update of the form $\mO = \mX + \mW_O(\A\mV)$, where $\mX$ denotes the normalized-and-positioned stream used to form $\mQ,\mK,\mV$. We use model width $d=64$, head dimension $d_h=64$, and sequence length $L=32$, with learned positional embeddings and a learned BOS offset added to token $0$ prior to LayerNorm. Training data are i.i.d.\ Gaussian token sequences $\mX_{\mathrm{in}}\in\mathbb{R}^{B\times L\times d}$ sampled fresh each step. To define the NOP mask, we sample once a fixed unit vector $\vv\in\mathbb{R}^d$ (no gradients) and compute scores $s_i=\langle (\mX_{\mathrm{in}})_i, \vv\rangle$ per token; the mask is $\mathbf{m}_i=\mathbbm{1}[s_i> \lambda]$. The regression target is
%\[
%\vy_i = (1-\mathbf{m}_i)\,(\mX_{\mathrm{in}})_i \;+\; \mathbf{m}_i\,\gamma\,(\mX_{\mathrm{in}})_i,
%\]
%so that $\gamma=0$ enforces a true NOP condition (masked tokens should be mapped to $0$), while $\gamma>0$ makes the task ``partially damped'' rather than fully nulled. We optimize mean-squared error $\|\mO-\mY\|_2^2$ with AdamW (learning rate $3\cdot 10^{-3}$, weight decay $10^{-4}$), typically for 30k steps at batch size $B=1024$ (or $B=2048$ for sweep runs), on GPU.

To measure sink emergence during training we track a simple \emph{sink score} computed from the average attention mass per key position: for a batch attention tensor $\A\in\mathbb{R}^{B\times L\times L}$ we form $\bar a_j = \frac{1}{BL}\sum_{b,i} \A_{b i j}$ (average attention received by position $j$), and report $\max_j \bar a_j$ as the sink score (with $\mathrm{std}_j(\bar a_j)$ used for uncertainty bands). We log this quantity every 10 steps and visualize its evolution versus training steps. Two sweeps used in Figure~3 are implemented as follows: (i) a \emph{damping sweep} over $\gamma\in\{0,10^{-3},10^{-2},10^{-1},0.5,1.0\}$, holding $\lambda=0$ fixed (so $\mathbb{P}[\mathbf{m}_i=1]\approx 0.5$ under Gaussian inputs), and (ii) an \emph{activation-frequency sweep} over thresholds $\lambda$ chosen to realize target no-op probabilities in $[0.1,0.9]$ via the standard-normal tail mapping $\lambda=\Phi^{-1}(1-p)$. Finally, to connect sinks to the interaction spectrum, we train one ``sink'' model ($\lambda=0$) and one ``no-sink'' baseline ($\lambda=10$) for 10k steps and compare the singular values of the induced query--key interaction matrix $\mW_Q\mW_K^\top\in\mathbb{R}^{d_h\times d_h}$ (computed by SVD on $\mW_Q\mW_K^\top$), reproducing the sharp spectral spike associated with the sink solution.

%Regular ViT block (dinov2 like), 2 block, ffn gelu like, one head, trained adam 100k total

\subsubsection{Optimization sweeps: entropy and damping}

\begin{figure*}[ht]
\vspace{-1mm}
\centering
\includegraphics[width=0.80\linewidth]{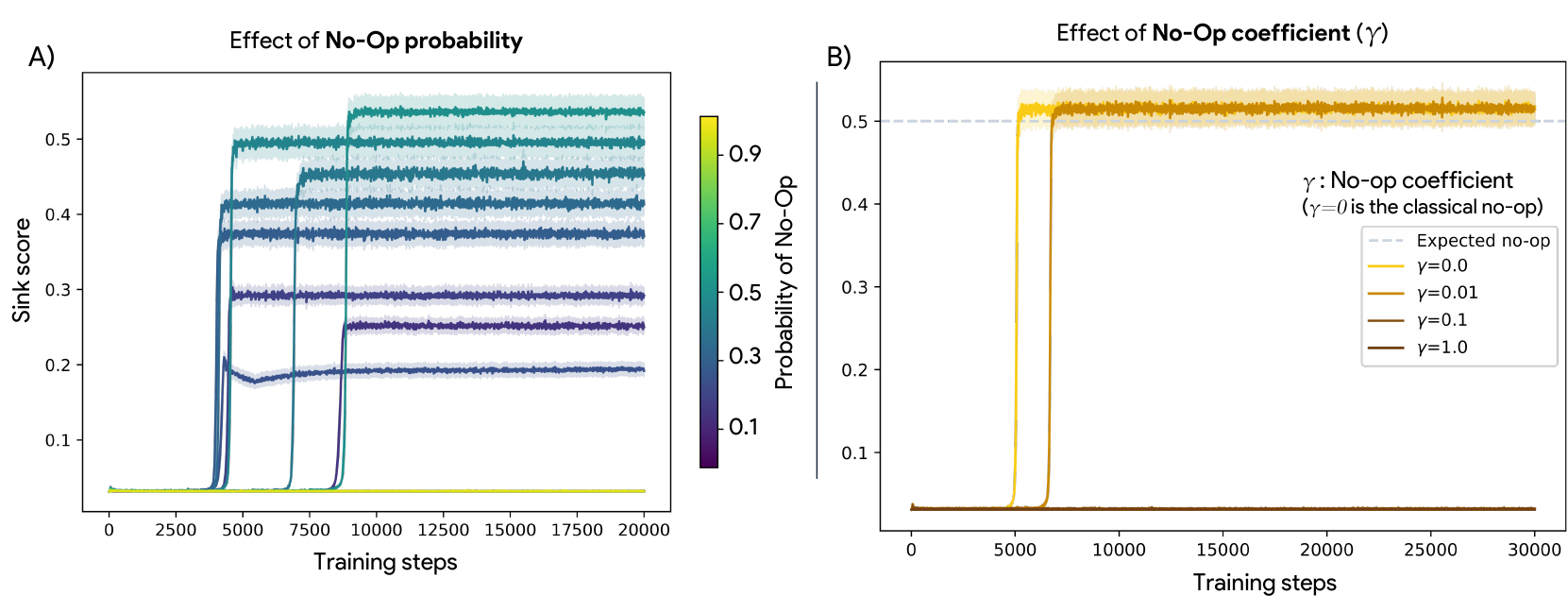}
\vspace{-4mm}
\caption{
\textbf{(A) High-entropy gating makes sinks appear faster.} When the head is forced to perform a \nop~for about half the tokens (maximal uncertainty about whether to update), optimization quickly discovers a dedicated sink position to reliably suppress updates. %\fel{no-op appear even when adaptive low entropy, just take more time.} \andy{Subset $\mathcal{I}$ is not defined here for the sink score.}
\textbf{(B) Even an imperfect \nop~pressure makes sinks appear.} Decreasing the \nop~ factor $\gamma$ pushes the desired output closer to zero on gated tokens, which increases the incentive to route to a null token and results in higher sink plateaus.%\fel{noop appear even when no op not strictly zero, to diminish norm of some token like 0.1 already enough sink appear, again just take a bit more time}
}
\vspace{-4mm}
\label{fig:no_op_signatures_2}
\end{figure*}

\textbf{\i{i} High Entropy Accelerates Emergence.}
We investigate whether the frequency of the \nop~condition in the dataset affects sink formation. We find that entropy of \nop{} existence impacts the number of iterations to find a sink: the sink solution emerges most rapidly when the model faces maximum entropy (50\% probability of \nop) as seen in~\cref{fig:no_op_signatures_2}A. Crucially, however, the mechanism is robust to low-entropy settings; even when the \nop~condition is rare, the sink topology still eventually emerges, though it requires more optimization steps to settle.

% \andy{To be honest, this conclusion is a bit hard to derive from this plot, but maybe I am colorblind. It seems like 'iterations to find a sink' is a bit random? Might be easier to see on plot with x-axis (prob. no-op), y-axis (iterations to sink).} 

\textbf{\i{ii} Damping is Sufficient (Strict Zero Not Required).}
Finally, we test if a strict zero-output constraint is necessary to trigger this phenomenon. We relax the objective to merely dampen the signal (setting the target $\vy_i = 0.1 \cdot \vx_i$ instead of $\bm{0}$). We find that \textbf{partial damping is sufficient}: the optimization pressure to reduce the norm of the residual update acts as a strong enough incentive for the model to allocate a dedicated sink token, rather than attempting to dampen the signal via distributed attention (\cref{fig:no_op_signatures_2}B). To put it simply, even an approximate \nop~task is enough to make the sinks appear.

\subsection{Toy Model of Broadcast}
\label{app:toy_broadcast}

We implement the broadcast task of Eq.~(6) using a minimal 2-layer ViT-style block with \emph{one} attention head per layer and pre-norm residual structure. Concretely, each attention layer applies LayerNorm, computes $\mQ,\mK,\mV$ via linear maps (no bias) to a head dimension equal to the model width ($d=d_h=64$), forms $\A=\softmax(\mQ\mK^\top/\sqrt{d_h})\in\mathbb{R}^{L\times L}$, and writes back through a linear output map with a residual on the \emph{original} stream (i.e., $\mX_{\text{out}} = \mX_{\text{in}} + \mW_O(\A\mV)$). Each layer is followed by an MLP block $\text{LN}\rightarrow \text{Linear}(d,4d)\rightarrow \text{GELU}\rightarrow \text{Linear}(4d,d)$ with a residual. We use sequence length $L=32$, learned positional embeddings added only in the \emph{first} attention layer, and train on i.i.d.\ Gaussian token sequences $\mX\in\mathbb{R}^{B\times L\times d}$ with batch size $B=1024$ on GPU. The selector direction $\vu$ is sampled once and fixed, and the rotation $\mathbf{R}$ is sampled once as an (approximately) orthogonal matrix via QR factorization. Training minimizes mean-squared error between model outputs and the broadcast targets (Eq.~(6)) using AdamW for 30k steps (learning rate $5\cdot 10^{-4}$, weight decay $10^{-4}$), with broadcast strength $\alpha$ typically set to $1.0$. 

To quantify whether attention implements a broadcast hub, we track (i) \emph{sink matching}: for each sample, we predict a sink index $\hat{\jmath}$ by taking the dominant column of attention mass, $\hat{\jmath}=\arg\max_j \frac{1}{L}\sum_i \A_{ij}$, and report the fraction of matches to the ground-truth selected index $j^\star$ (as well as the corresponding peak column mass as a ``sink strength''). We also measure \emph{mechanistic} correlates available from the forward pass: norms of the values and keys at the selected token, $\|\vv_{j^\star}\|$ and $\|\vk_{j^\star}\|$, compared to average $\|\vv\|$ and $\|\vk\|$ across positions; the per-token magnitude of the pre-output update $\| (\A\mV)_t \|$ (reporting mean and standard deviation across $t$ as a proxy for how uniformly the payload is written across the sequence); and the similarity structure of queries across positions via cosine similarity to a reference query (e.g.\ $\vq_0$). These diagnostics are used to distinguish \emph{how} the model routes attention to the selected token (e.g.\ amplifying $\|\vk_{j^\star}\|$ versus making queries near-constant for easy alignment) and to verify that the induced update is contentful and broadly distributed, consistent with the broadcast mechanism.

\section{Two Hypotheses for Attention Sinks}
\label{app:hyp}

We start by recalling our definition of a sink:
\begin{definition}[$\epsilon$-Sink]
A position $s$ is an $\epsilon$-sink if $\A_{is} \geq 1 - \epsilon$ for all positions $i \in \{1, \ldots, n\}$, where $\epsilon \in [0, 1)$ is small.
\end{definition}

\subsection{Proof of Lemma \ref{lem:adaptivity}: Mechanisms of Adaptivity}
\label{app:adaptivity_proof}

\begin{proof}
Consider a simplified attention mechanism with a single head. Let $\vx_i$ be a query token and $\vx_s$ be the dedicated sink token. The attention score (logit) for the sink is given by:
\begin{equation}
    S_{is} = \frac{1}{\sqrt{d_h}} \vx_i^\top \bm{W}_Q \bm{W}_K^\top \vx_s = \frac{1}{\sqrt{d_h}} \vx_i^\top \bm{\Theta} \vx_s.
\end{equation}
We assume the gating decision is based on a specific feature direction $\vu$ (with $\|\vu\|=1$). 
We decompose the query as $\vx_i = \lambda_i \vu + \vx_{\perp}$, where $\lambda_i$ is the scalar activation of the feature triggering the \nop, and $\vx_{\perp} \perp \vu$.

To consistently route attention to the sink when the feature is active ($\lambda_i > 0$), the model aligns the interaction matrix $\bm{\Theta}$ and the sink token $\vx_s$ with this direction. We construct:
\begin{itemize}
    \item The interaction matrix as rank-1: $\bm{\Theta} = \sigma \vu \vu^\top$, where $\sigma = \|\bm{\Theta}\|_2$ is the spectral norm.
    \item The sink token as aligned with the flag: $\vx_s = \gamma_s \vu$, where $\gamma_s = \|\vx_s\|$ is the token norm.
\end{itemize}

Substituting these into the attention score equation:
\begin{equation}
    S_{is} = \frac{1}{\sqrt{d_h}} (\lambda_i \vu + \vx_{\perp})^\top (\sigma \vu \vu^\top) (\gamma_s \vu) = \frac{\lambda_i \sigma \gamma_s}{\sqrt{d_h}}.
\end{equation}

For the attention weight $\A_{is}$ to approach 1 (hard gating), the score $S_{is}$ must exceed the scores of all other $N$ tokens by a large margin $M$ (where $M$ depends on the desired sharpness of the softmax). That is:
\begin{equation}
    \frac{\lambda_i \sigma \gamma_s}{\sqrt{d_h}} \gg \max_{j \neq s} S_{ij} \implies \sigma \gamma_s \gg \frac{M \sqrt{d_h}}{\lambda_i}.
\end{equation}

This inequality $\sigma \gamma_s \gg C$ reveals the two distinct optimization paths available to the model to satisfy the gating condition: \textbf{(i) Spectral Gating:} The model increases $\sigma$ (the singular value of $\bm{W}_Q \bm{W}_K^\top$) while keeping the sink norm $\gamma_s$ moderate. \textbf{(ii) Massive Activation:} The model increases $\gamma_s$ (the norm of the sink token $\vx_s$) while keeping the weight spectral norm $\sigma$ moderate.
Both solutions are mathematically equivalent for producing the necessary logit $S_{is}$ to trigger the sink.
\end{proof}

% adaptive no-op
\subsection{Adaptive No-Op}
\label{sec:noop_unique}
\begin{lemma}[\textbf{Uniqueness of No-Op Solution}]\label{lem:noop_unique}
Let $s$ be an $\epsilon$-sink and suppose the no-op constraint $\|\vo_i\| \leq \delta$ holds for all tokens $i$. Let $V_{\max} = \max_j \|\vv_j\|$. When $\epsilon > 0$, two solutions satisfy this constraint: either the sink has negligible value norm $\|\vv_s\| \leq \delta + \epsilon \cdot V_{\max}$, or the sink's value vector cancels the residual contribution from other tokens. However, as $\epsilon \to 0$, the cancellation solution vanishes and $\|\vv_s\| \leq \delta$ becomes the unique solution.
\end{lemma}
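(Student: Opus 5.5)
The approach is to decompose each output around the sink term and apply the triangle inequality; the resulting bound on $\|\vv_s\|$ holds for every $\epsilon$, and the case split only concerns how the slack is spent. Fix any token $i$ and write
\[
\vo_i = \A_{is}\vv_s + \vr_i, \qquad \vr_i := \sum_{j\neq s}\A_{ij}\vv_j .
\]
Since $\A$ is row-stochastic and $s$ is an $\epsilon$-sink, we have $\sum_{j\neq s}\A_{ij} = 1-\A_{is} \le \epsilon$. Hence $\|\vr_i\| \le \epsilon V_{\max}$ by convexity and the triangle inequality. This is the only estimate needed.

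\textbf{Step 1 (norm bound).} From $\A_{is}\vv_s = \vo_i - \vr_i$ we get $\A_{is}\|\vv_s\| \le \delta + \epsilon V_{\max}$. Using $\A_{is}\ge 1-\epsilon$ gives
\[
\|\vv_s\| \le \frac{\delta+\epsilon V_{\max}}{1-\epsilon}.
\]
To first order in $\epsilon$ this is the stated bound $\delta+\epsilon V_{\max}$ up to an $O(\epsilon(\delta+\epsilon V_{\max}))$ correction. I would state the exact form and note the agreement to leading order; alternatively, I would read the lemma's bound as holding up to the factor $(1-\epsilon)^{-1}$. Reconciling this constant with the lemma as literally stated is the main technical wrinkle. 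It is bookkeeping, not conceptual.

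\textbf{Step 2 (the two regimes for $\epsilon>0$).} The constraint $\|\A_{is}\vv_s+\vr_i\|\le\delta$ can be met in two ways. In the first, $\|\vv_s\|$ is itself small, with no interaction with $\vr_i$. In the second, $\A_{is}\vv_s\approx -\vr_i$, so the sink value is nonzero but opposes the leak from the other tokens. I would exhibit the second regime by a concrete construction: take $\delta=0$, fix the off-sink values and weights, and set $\vv_s = -\vr_i/\A_{is}$. This requires $\vr_i$ to be (nearly) the same for the queries in question, for example by making the non-sink attention pattern identical across those queries. Then $\|\vv_s\|>\delta$ whenever $\vr_i\ne 0$, so cancellation is a genuine alternative. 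Step 1 also shows that the size of any cancelling sink value is at most of order $\epsilon V_{\max}$.

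\textbf{Step 3 (limit $\epsilon\to 0$).} With $V_{\max}$ held bounded (implicitly assumed), $\|\vr_i\|\le\epsilon V_{\max}\to0$ and $\A_{is}\to1$. The Step 1 bound then converges to $\|\vv_s\|\le\delta$. At $\epsilon=0$ we have exactly $\vo_i=\vv_s$, so $\|\vv_s\|\le\delta$ is forced, and the cancellation solution disappears because there is nothing left to cancel. The only point needing care is this uniform boundedness of $V_{\max}$: without it, $\epsilon V_{\max}$ need not vanish. I would make it an explicit hypothesis.
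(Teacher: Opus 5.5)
Your proposal is correct and takes the same route as the paper's proof. Both use the split $\vo_i = \A_{is}\vv_s + \vr_i$ with $\|\vr_i\| \le \epsilon V_{\max}$, the two regimes (small $\vv_s$ versus $\vv_s \approx -\vr_i/\A_{is}$), and the limit in which $\vr_i \to \bm{0}$ and $\vo_i \to \vv_s$. Your two caveats are sharper than the paper's heuristic treatment, and both are genuinely needed. In one dimension, with $\delta = 0$ and $\epsilon < 1/2$, take a sink value $a > 0$ with weight $1-\epsilon$, cancelled by a single off-sink value $-(1-\epsilon)a/\epsilon$ with weight $\epsilon$. This gives $\|\vv_s\| = a = (\delta+\epsilon V_{\max})/(1-\epsilon) > \delta + \epsilon V_{\max}$. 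The same cancellation solution survives as $\epsilon \to 0$ precisely because $V_{\max} \to \infty$. So the factor $(1-\epsilon)^{-1}$ and the bounded-$V_{\max}$ hypothesis should indeed be made explicit.
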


\begin{proof}
Let $s$ be an $\epsilon$-sink. The attention output for any token $i$ is:
\begin{equation}
    \vo_i = \A_{is} \vv_s + \sum_{j \neq s} \A_{ij} \vv_j = (1 - \epsilon_i) \vv_s + \vr_i,
\end{equation}
where $\epsilon_i = 1 - \A_{is} \leq \epsilon$ and the residual $\vr_i = \sum_{j \neq s} \A_{ij} \vv_j$ satisfies $\|\vr_i\| \leq \epsilon \cdot V_{\max}$.

When $\epsilon > 0$, the constraint $\|\vo_i\| \leq \delta$ admits two types of solutions. The first sets $\|\vv_s\| \approx 0$, yielding $\|\vo_i\| \approx \|\vr_i\| \leq \epsilon \cdot V_{\max}$. The second exploits cancellation by choosing $\vv_s \approx -\vr_i / (1 - \epsilon_i)$, yielding $\vo_i \approx \bm{0}$.

As $\epsilon \to 0$, the residual vanishes: $\vr_i \to \bm{0}$. The output simplifies to $\vo_i \to \vv_s$, so the constraint $\|\vo_i\| \leq \delta$ becomes $\|\vv_s\| \leq \delta$. The cancellation solution also converges to $\vv_s \to \bm{0}$ since there is nothing left to cancel. Thus $\|\vv_s\| \approx 0$ is the unique solution in this limit.
\end{proof}

% broadcast hypothesis
\subsection{Broadcast}

\begin{proof}[Proof of Corollary~\ref{cor:rank1}]
The attention output matrix is given by $\mO = \A \mV$, where $\A \in \mathbb{R}^{n \times n}$ is the row-stochastic attention matrix and $\mV \in \mathbb{R}^{n \times d_h}$ is the matrix of value vectors with rows $\vv_1^\top, \ldots, \vv_n^\top$.

Let $s$ be an $\epsilon$-broadcast sink. We decompose the attention matrix as:
\begin{equation}
    \A = \A_s \bm{e}_s^\top + \mE,
\end{equation}
where $\bm{a}_s = (\A_{1s}, \ldots, \A_{ns})^\top \in \mathbb{R}^n$ is the $s$-th column of $\A$, $\ve_s$ is the $s$-th standard basis vector, and $\mE$ is the residual matrix with entries $\mE_{ij} = \A_{ij}$ for $j \neq s$ and $\mE_{is} = 0$.

Since $\A$ is row-stochastic and $\A_{is} \geq 1 - \epsilon$, we have $\sum_{j \neq s} \A_{ij} \leq \epsilon$ for each row $i$. Thus, each row of $\mE$ has $\ell_1$-norm at most $\epsilon$, yielding $\|\mE\|_{\infty} \leq \epsilon$.

Multiplying by $\mV$:
\begin{equation}
    \mO = \A \mV = \bm{a}_s \ve_s^\top \mV + \mE \mV = \bm{a}_s \vv_s^\top + \mE \mV.
\end{equation}
The first term $\bm{a}_s \vv_s^\top$ is rank-1. The residual satisfies:
\begin{equation}
    \|\mE \mV\|_{\infty} \leq \|\mE\|_{\infty} \max_j \|\vv_j\| \leq \epsilon \cdot \max_j \|\vv_j\|.
\end{equation}
Therefore:
\begin{equation}
    \left\| \mO - \bm{a}_s \vv_s^\top \right\|_{\infty} \leq \epsilon \cdot \max_j \|\vv_j\|.
\end{equation}

The rank-1 matrix $\bm{a}_s \vv_s^\top$ has singular value decomposition $\sigma_1 \vu_1 \vw_1^\top$, where $\sigma_1 = \|\bm{a}_s\| \|\vv_s\|$, $\vu_1 = \bm{a}_s / \|\bm{a}_s\|$, and $\vw_1 = \vv_s / \|\vv_s\|$. Since $\A_{is} \geq 1 - \epsilon$ for all $i$, we have $\|\bm{a}_s\| \geq \sqrt{n}(1 - \epsilon)$. Thus, the dominant singular value satisfies:
\begin{equation}
    \sigma_1 \geq \sqrt{n}(1 - \epsilon) \|\vv_s\|,
\end{equation}
and the right singular vector aligns exactly with $\vv_s / \|\vv_s\|$.
\end{proof}

\begin{proof}[Proof of Corollary~\ref{cor:similarity}]
Let $\vx_i^{\ell+1} = \vx_i^{\ell} + \vo_i$ where $\vo_i = \sum_j \A_{ij} \vv_j$. Let $s$ be an $\epsilon$-broadcast sink.

We first bound the deviation from the broadcast approximation. For each position $i$:
\begin{align}
    \vo_i &= \A_{is} \vv_s + \sum_{j \neq s} \A_{ij} \vv_j.
\end{align}
The residual term satisfies:
\begin{equation}
    \left\| \sum_{j \neq s} \A_{ij} \vv_j \right\| \leq \sum_{j \neq s} \A_{ij} \|\vv_j\| \leq \epsilon \cdot \max_j \|\vv_j\|.
\end{equation}
Thus, $\vo_i = \A_{is} \vv_s + \vr_i$ where $\|\vr_i\| \leq \epsilon \cdot \max_j \|\vv_j\|$.

The mean representation after the block is:
\begin{equation}
    \bar{\vx}^{\ell+1} = \bar{\vx}^{\ell} + \bar{\vo},
\end{equation}
where $\bar{\vo} = \frac{1}{n} \sum_i \vo_i = \bar{a}_s \vv_s + \bar{\vr}$, with $\bar{a}_s = \frac{1}{n} \sum_i \A_{is}$ and $\|\bar{\vr}\| \leq \epsilon \cdot \max_j \|\vv_j\|$.

The centered representation is:
\begin{equation}
    \vx_i^{\ell+1} - \bar{\vx}^{\ell+1} = \left( \vx_i^{\ell} - \bar{\vx}^{\ell} \right) + (\A_{is} - \bar{a}_s) \vv_s + (\vr_i - \bar{\vr}).
\end{equation}

Let $\vd_i^{\ell} = \vx_i^{\ell} - \bar{\vx}^{\ell}$ and $\delta_i = \A_{is} - \bar{a}_s$. The variance after the block is:
\begin{align}
    &\frac{1}{n} \sum_i \left\| \vx_i^{\ell+1} - \bar{\vx}^{\ell+1} \right\|^2 = \frac{1}{n} \sum_i \left\| \vd_i^{\ell} + \delta_i \vv_s + (\vr_i - \bar{\vr}) \right\|^2.
\end{align}

Expanding and using $\sum_i \delta_i = 0$:
\begin{align}
    &= \frac{1}{n} \sum_i \left\| \vd_i^{\ell} \right\|^2 + \frac{1}{n} \sum_i \delta_i^2 \|\vv_s\|^2 + \frac{1}{n} \sum_i \|\vr_i - \bar{\vr}\|^2 \nonumber \\
    &\quad + \frac{2}{n} \sum_i \delta_i \left\langle \vd_i^{\ell}, \vv_s \right\rangle + \frac{2}{n} \sum_i \left\langle \vd_i^{\ell}, \vr_i - \bar{\vr} \right\rangle \nonumber \\
    &\quad + \frac{2}{n} \sum_i \delta_i \left\langle \vv_s, \vr_i - \bar{\vr} \right\rangle.
\end{align}

The second term equals $\mathbb{V}(\bm{a}_s) \|\vv_s\|^2$ where $\mathbb{V}(\bm{a}_s) = \frac{1}{n} \sum_i \delta_i^2$ is the variance of attention weights to the sink. The remaining terms involving $\vr_i$ are bounded by $\mathcal{O}(\epsilon \cdot \max_j \|\vv_j\|)$. 

For the cross-term $\frac{2}{n} \sum_i \delta_i \langle \vd_i^{\ell}, \vv_s \rangle$, by Cauchy-Schwarz:
\begin{equation}
    \left| \frac{2}{n} \sum_i \delta_i \left\langle \vd_i^{\ell}, \vv_s \right\rangle \right| \leq 2 \sqrt{\mathbb{V}(\bm{a}_s)} \cdot \sqrt{\frac{1}{n} \sum_i \left\| \vd_i^{\ell} \right\|^2} \cdot \|\vv_s\|.
\end{equation}

Combining these bounds:
\begin{align}
    \frac{1}{n} \sum_i \left\| \vx_i^{\ell+1} - \bar{\vx}^{\ell+1} \right\|^2 &\leq \frac{1}{n} \sum_i \left\| \vx_i^{\ell} - \bar{\vx}^{\ell} \right\|^2 \nonumber \\
    &\quad + \mathcal{O}\left( \|\vv_s\|^2 \cdot \mathbb{V}(\bm{a}_s) \right) + \mathcal{O}(\epsilon).
\end{align}

When attention to the sink is uniform, i.e., $\A_{is} = c$ for all $i$ with $c \geq 1 - \epsilon$, we have $\mathbb{V}(\bm{a}_s) = 0$ and $\delta_i = 0$ for all $i$. In this case:
\begin{equation}
    \frac{1}{n} \sum_i \left\| \vx_i^{\ell+1} - \bar{\vx}^{\ell+1} \right\|^2 \leq \frac{1}{n} \sum_i \left\| \vx_i^{\ell} - \bar{\vx}^{\ell} \right\|^2 + \mathcal{O}(\epsilon).
\end{equation}
The variance is approximately preserved, but all tokens receive the same additive component $c \cdot \vv_s$, shifting collectively along the direction $\vv_s$.
\end{proof}

\section{Extended Literature Review}

\subsection{Gated Attention}
\label{subsec:gated_attention}

We briefly describe the \emph{gated attention} modification of softmax attention \cite{qiu2025gated} used throughout this paper. Starting from the standard single-head formulation in Eq.~\eqref{eq:attention}, let $\mX \in \mathbb{R}^{n \times d}$ denote the matrix whose $i$-th row is $\vx_i^\top$, and let
\begin{equation}
\mQ = \mX \mW_Q,\quad \mK = \mX \mW_K,\quad \mV = \mX \mW_V,
\end{equation}
\begin{equation}    
\mS = \frac{\mQ \mK^\top}{\sqrt{d_h}},\quad \A = \softmax(\mS),\quad \mO = \A \mV .
\end{equation}
Gated attention introduces an additional \emph{gate} computed from a (typically pre-normalized) hidden-state signal and uses it to modulate one of the intermediate quantities in the attention block. In its generic (multiplicative) form, given an input $\mY$ to be gated and a signal $\mX$ from which to compute gate scores, we define
\begin{equation}
\label{eq:generic_gate}
g(\mY;\mX) \;:=\; \mY \odot \sigma(\mX \mW_\theta),
\end{equation}
where $\mW_\theta$ are learnable parameters, $\sigma(\cdot)$ is an activation function (by default a sigmoid), and $\odot$ denotes elementwise multiplication.

\paragraph{Default gated-softmax output.} Unless stated otherwise, we use \emph{head-specific, multiplicative sigmoid gating} applied to the SDPA (softmax-attention) output. Concretely, for a single head we compute a gate matrix
\begin{equation}
\label{eq:gate_matrix}
\mG \;=\; \sigma(\mX \mW_\theta) \in \mathbb{R}^{n \times d_h},
\end{equation}
and form the gated head output
\begin{equation}
\label{eq:gated_output}
\widetilde{\mO} \;=\; \mO \odot \mG \;=\; (\A \mV) \odot \sigma(\mX \mW_\theta),
\end{equation}
i.e $\widetilde{\vo}_i \;=\; \vo_i \odot \sigma(\vx_i \mW_\theta)$. This $\widetilde{\vo}_i$ then replaces $\vo_i$ in the residual update. % in Eq.~\eqref{eq:residual_update}. 
One can analogously gate other locations (e.g., $\mQ$, $\mK$, $\mV$, or the concatenated multi-head output); when relevant, we will specify the gated quantity explicitly while keeping the same notation as above.

\subsection{Register Tokens}
\label{subsec:register_tokens}

We also consider the \emph{register token} modification of a ViT \cite{darcet2023vision}, which appends a small number of additional learnable tokens to the input sequence. Concretely, let $\vx_1,\dots,\vx_n \in \mathbb{R}^d$ denote the patch-token representations (and optionally a class token, treated as an additional special token in the same way). We introduce $r$ register tokens with learnable embeddings $\vr_1,\dots,\vr_r \in \mathbb{R}^d$ that are independent of the input image and are appended to the sequence after patch embedding. The resulting extended sequence has length $\tilde{n}=n+r$ and can be written as
\begin{equation}
\label{eq:register_sequence}
\tilde{\mX}
\;=\;
\begin{bmatrix}
\vx_1^\top \\[-2pt]
\vdots \\[-2pt]
\vx_n^\top \\[-2pt]
\vr_1^\top \\[-2pt]
\vdots \\[-2pt]
\vr_r^\top
\end{bmatrix}
\;\in\; \mathbb{R}^{\tilde n \times d},
\end{equation}
with $\tilde n = n+r$. Attention is then computed \emph{without any other change} on $\tilde{\mX}$: for each head,
\begin{equation}
\label{eq:register_attention}
\tilde{\mQ}=\tilde{\mX}\mW_Q,\quad
\tilde{\mK}=\tilde{\mX}\mW_K,\quad
\tilde{\mV}=\tilde{\mX}\mW_V,\quad
\end{equation}
\begin{equation}    
\tilde{\mS}=\frac{\tilde{\mQ}\tilde{\mK}^\top}{\sqrt{d_h}},\quad
\tilde{\A}=\softmax(\tilde{\mS}),\quad
\tilde{\mO}=\tilde{\A}\tilde{\mV}.
\end{equation}
%The residual update in Eq.~\eqref{eq:residual_update} is applied to all $\tilde{n}$ tokens. 
The same residual update is applied to all $\tilde{n}$ tokens.
At the end of the network, the register tokens are \emph{discarded}: downstream representations use only the original tokens (e.g., patch tokens and optionally \texttt{[CLS]}), while registers serve purely as internal workspace during the forward pass.

\subsection{Non-softmax Attention}
\label{subsec:non-softmax_attention}

\paragraph{1. Clipped-softmax attention.}

Clipped-softmax \cite{bondarenko2023quantizable} replaces the usual row-wise softmax with a stretched-and-clipped variant. First form standard softmax weights
\begin{equation}
\tilde{\A} = \operatorname{softmax}(\mS)\in\mathbb{R}^{n\times n},\quad 
\tilde{a}_{ij} = \frac{e^{s_{ij}}}{\sum_{k=1}^n e^{s_{ik}}}.
\end{equation}
Given hyperparameters $\zeta\ge 1$, $\gamma\le 0$, define the elementwise clipped-softmax
\begin{equation}
\operatorname{clipped\_softmax}(\mS;\zeta,\gamma) 
:= \operatorname{clip}\big((\zeta-\gamma)\,\tilde{\A} + \gamma,\;0,1\big),
\end{equation}
where $\operatorname{clip}(x,0,1)$ truncates each entry to $[0,1]$. The attention matrix is
\begin{equation}
\A_{\text{clip}} = \operatorname{clipped\_softmax}(\mS;\zeta,\gamma),
\quad
\operatorname{Attn}_{\text{clip}}(\mX) = \A_{\text{clip}}\mV.
\end{equation}
Softmax outputs in $(0,1)$ are stretched to $(\gamma,\zeta)$ and then clipped back to $[0,1]$, so entries larger than $\tfrac{1-\gamma}{\zeta-\gamma}$ become exactly $1$ and entries smaller than $-\tfrac{\gamma}{\zeta-\gamma}$ become exactly $0$. This allows exact zeros/ones with finite score ranges and kills gradients where clipping is active.

\paragraph{2. ReLU-attention with $1/n$ sequence-length scaling.}

ReLU-attention \cite{wortsman2023replacing} uses a pointwise ReLU on scores and normalizes only by sequence length. For each pair $(i,j)$,
\begin{equation}
a_{ij}^{\text{ReLU}} 
= \frac{1}{n}\,\operatorname{ReLU}(s_{ij})
= \frac{1}{n}\,\max\{s_{ij},0\},
\end{equation}
so the attention matrix is
\begin{equation}
\A_{\text{ReLU}} = \frac{1}{n}\,\operatorname{ReLU}(\mS)\quad\text{(elementwise)}.
\end{equation}
The output is
\begin{equation}
\operatorname{Attn}_{\text{ReLU}}(\mX) = \A_{\text{ReLU}}\mV
= \frac{1}{n}\,\operatorname{ReLU}(\mS)\,\mV.
\end{equation}
Rows of $\A_{\text{ReLU}}$ are not normalized to sum to $1$; the factor $1/n$ is chosen so that at (random) initialization $\mathbb{E}_j[a_{ij}^{\text{ReLU}}]=O(1/n)$, matching the scale of standard softmax attention without requiring a reduction over the sequence dimension.

\paragraph{3. General scaled point-wise attention family.}

This family generalizes ReLU-attention by replacing softmax with a generic elementwise nonlinearity plus a length-dependent scaling \cite{saratchandran2024rethinking}. For an activation $h:\mathbb{R}\to\mathbb{R}$ and exponent $\alpha\in[0,1]$, define
\begin{equation}
a_{ij}^{(h,\alpha)} = n^{-\alpha}\,h(s_{ij}),
\quad
\A_{h,\alpha} = n^{-\alpha}h(\mS)\;\;\text{(elementwise)}.
\end{equation}
The attention output is
\begin{equation}
\operatorname{Attn}_{h,\alpha}(\mX) = \A_{h,\alpha}\mV
= n^{-\alpha}h(\mS)\mV,
\end{equation}
with $h$ chosen from $\operatorname{ReLU}$, $\operatorname{ReLU}^2$, $\operatorname{GELU}$, $\operatorname{softplus}$, $\text{identity}$, $\operatorname{sigmoid}$. ReLU-attention is recovered as the special case $h=\operatorname{ReLU}$, $\alpha=1$. Empirically, $\alpha\approx 1$ stabilizes training by keeping the typical magnitude of entries $a_{ij}^{(h,\alpha)}$ on the order of $1/n$, again echoing the effective scaling provided by softmax.

\paragraph{4. Polynomial attention with $\sqrt{1/n}$ scaling.}

Polynomial attention \cite{saratchandran2024rethinking} replaces the softmax by an elementwise polynomial of the scores, with a $\sqrt{1/n}$ prefactor chosen to control the Frobenius norm of the attention matrix. Starting from the same score matrix $\mS$, define for a degree-$p>0$ power
\begin{equation}
\A_{\text{poly}} = \sqrt{\frac{1}{n}}\,\mS^{\odot p},
\end{equation}
where $\mS^{\odot p}$ denotes the elementwise power $(\mS^{\odot p})_{ij} = s_{ij}^p$. The head output is
\begin{equation}
\operatorname{Attn}_{\text{poly}}(\mX) 
= \A_{\text{poly}}\mV
= \sqrt{\frac{1}{n}}\,\mS^{\odot p}\mV.
\end{equation}
In the experiments they focus on cubic activations, e.g. $\varphi(x)=\tfrac{1}{k}x^3$ with $k\approx\sqrt{n}$ (fixed or learned “dynamic scale”), which fits this template up to a constant factor. Under Gaussian assumptions on $\mQ,\mK$, the $\sqrt{1/n}$ scaling yields $\|\A_{\text{poly}}\|_F = O(\sqrt{n})$ and similarly controlled Jacobians, mimicking softmax’s implicit Frobenius-norm regularization while dropping non-negativity, normalization, and sparsity.

\paragraph{5. Sigmoid self-attention.}

Sigmoid attention \cite{ramapuram2024theory} replaces row-wise softmax by an elementwise sigmoid with an additive bias that can depend on $n$. With the same $\mS$, define
\begin{equation}
\sigma_b(u) := \bigl(1+e^{-(u+b)}\bigr)^{-1}
\end{equation}
for a learnable or hand-chosen bias $b$ (scalar or matrix). Then
\begin{equation}
\A_{\text{sig}} = \sigma_b(\mS)\quad\text{(elementwise)},
\qquad
a_{ij}^{\text{sig}} = \sigma\bigl(s_{ij}+b\bigr),
\end{equation}
and
\begin{equation}
\operatorname{Attn}_{\text{sig}}(\mX) 
= \A_{\text{sig}}\mV 
= \sigma_b(\mS)\,\mV.
\end{equation}
Rows of $\A_{\text{sig}}$ are not normalized and entries lie strictly in $(0,1)$. With appropriate normalization schemes and a suitable choice of $b$ (often roughly $b\approx -\log n$ at initialization), attention norms remain controlled, and SigmoidAttn-based transformers can act as universal approximators with different regularity properties compared to softmax attention, while remaining compatible with efficient kernels such as FLASH-SIGMOID.

\subsection{Other Related Works}

\paragraph{Attention sinks in LLMs.}
In autoregressive LLMs, attention sinks often coincide with special tokens (e.g., beginning-of-sequence), high-frequency separators (newlines, punctuation), or early positions that become convenient aggregation points under causal masking \cite{wu2024role}. Once established, sinks can reduce the model’s \emph{effective} context utilization: instead of distributing attention to the most relevant evidence, a significant fraction of attention is repeatedly spent on a small set of sink positions, which can manifest as degraded long-context retrieval, increased copying of boilerplate, or brittle reasoning when relevant operands are far apart in the prompt \cite{barbero2025llms, qiu2025gated}. This behavior is particularly salient in settings where the model must integrate multiple dispersed constraints—precisely the regime emphasized by our relational reasoning perspective—because sink formation can act like an implicit bottleneck that prioritizes globally “easy-to-attend” anchors over the specific relational evidence needed for correct composition.

\paragraph{Mitigation strategies beyond register tokens and gated attention.}
Beyond architectural additions like register tokens and modifications like gated attention, sinks can be mitigated via (i) \emph{normalization and scaling} interventions that prevent extreme QK logits (e.g., QK-normalization, per-head temperature control, or logit clipping) so that a small set of keys cannot dominate attention across many queries; (ii) \emph{regularization objectives} that discourage overly peaky or position-collapsed attention patterns (e.g., entropy/coverage penalties, auxiliary losses that promote diversity across heads, or discouraging persistent mass on special-token keys); (iii) \emph{data and curriculum} strategies, such as training on longer contexts with targeted supervision requiring evidence far from the prompt prefix (and augmentations that randomize separators/formatting), which reduces the incentive to overuse early “anchor” tokens; and (iv) \emph{inference-time adjustments}, including attention temperature tuning, selective head masking/pruning for sink-heavy heads, or KV-cache compression policies that preserve high-utility keys while downweighting persistently over-attended anchors.

\newpage

\section{LeJEPA Mitigation Experiments: Training Details}
\label{app:mitigation_exp}

We trained ViT-L models from scratch on ImageNet-1K for 100 epochs using the LeJEPA self-supervised objective~\cite{balestriero2025lejepa}, which combines a SIGReg regularization loss and a multi-view invariance loss. Architecture specifications are provided in Table~\ref{tab:lejepa_architecture}, optimization hyperparameters in Table~\ref{tab:lejepa_optimization}, regularization in Table~\ref{tab:lejepa_regularization}, and data augmentation in Table~\ref{tab:lejepa_augmentation}. Each model was trained on 4 NVIDIA H100 GPUs using PyTorch DDP.

We evaluate six variants: (1)~Baseline, (2)~Registers (4 tokens), (3)~Elementwise gating, (4)~Elementwise gating + Registers, (5)~Headwise gating, and (6)~Headwise gating + Registers, each trained with three random seeds (9000, 9001, 9002). For results reported in Table~\ref{tab:lejepa_full_results}, we select the better-performing gating variant (elementwise vs.\ headwise) based on mean ImageNet-1K linear probe accuracy across seeds; full per-seed linear probe results are provided in Table~\ref{tab:lejepa_imagenet}.

\begin{table}[h]
\small
\centering
\caption{ViT-Large architecture specifications for LeJEPA experiments.}
\label{tab:lejepa_architecture}
\begin{tabular}{ll}
\toprule
\textbf{Parameter} & \textbf{Value} \\
\midrule
Image size & $224 \times 224$ \\
Patch size & $16 \times 16$ \\
Number of patches & 196 \\
Embedding dimension & 1024 \\
Transformer blocks & 24 \\
Attention heads & 16 \\
Head dimension & 64 \\
MLP ratio & 4.0 \\
MLP hidden dimension & 4096 \\
QKV bias & False \\
Register tokens (when enabled) & 4 \\
Readout & CLS token \\
Projector & $1024 \to 2048 \to 2048 \to 128$ (BN + ReLU) \\
Projector output dimension & 128 \\
\bottomrule
\end{tabular}
\end{table}

\begin{table}[h]
\small
\centering
\caption{Optimization hyperparameters for LeJEPA experiments.}
\label{tab:lejepa_optimization}
\begin{tabular}{ll}
\toprule
\textbf{Parameter} & \textbf{Value} \\
\midrule
Optimizer & AdamW \\
Learning rate & $5 \times 10^{-4}$ \\
Weight decay & 0.05 \\
Warmup epochs & 10 \\
Total epochs & 100 \\
LR schedule & Linear warmup + cosine annealing to $\text{lr}/1000$ \\
Gradient clip & 1.0 \\
Batch size (per GPU) & 64 \\
Effective batch size & 256 (4 GPUs) \\
Precision & Mixed (bfloat16) \\
Online probe LR & $1 \times 10^{-3}$ \\
Number of views & 4 \\
SIGReg weight $\lambda$ & 0.02 \\
\bottomrule
\end{tabular}
\end{table}

\begin{table}[h]
\small
\centering
\caption{Regularization hyperparameters for LeJEPA experiments.}
\label{tab:lejepa_regularization}
\begin{tabular}{ll}
\toprule
\textbf{Technique} & \textbf{Value} \\
\midrule
Drop path (stochastic depth) & 0.1 \\
Dropout & 0.0 \\
Attention dropout & 0.0 \\
\bottomrule
\end{tabular}
\end{table}

\begin{table}[h]
\small
\centering
\caption{Training data augmentation pipeline for LeJEPA experiments. Four augmented views are generated per image.}
\label{tab:lejepa_augmentation}
\begin{tabular}{ll}
\toprule
\textbf{Augmentation} & \textbf{Parameters} \\
\midrule
Random resized crop & Scale: $(0.08, 1.0)$ \\
Color jitter & Brightness/contrast/saturation $= 0.8$, hue $= 0.2$, $p = 0.8$ \\
Random grayscale & $p = 0.2$ \\
Gaussian blur & Kernel size $= 7$, $\sigma \in (0.1, 2.0)$, $p = 0.5$ \\
Random solarize & Threshold $= 128$, $p = 0.2$ \\
Random horizontal flip & $p = 0.5$ \\
Normalization & ImageNet statistics \\
\midrule
\multicolumn{2}{l}{\textit{Validation only}} \\
\midrule
Resize & $256 \times 256$ \\
Center crop & $224 \times 224$ \\
\bottomrule
\end{tabular}
\end{table}

\begin{table}[h]
\small
\centering
\caption{ImageNet-1K linear probe top-1 accuracy (\%) for each LeJEPA variant across three seeds.}
\label{tab:lejepa_imagenet}
\begin{tabular}{lccc}
\toprule
\textbf{Variant} & \textbf{9000} & \textbf{9001} & \textbf{9002} \\
\midrule
Baseline                  & 69.31 & 69.37 & 69.44 \\
Registers                 & 69.14 & 69.37 & 69.42 \\
Elementwise gating        & 69.56 & 69.49 & 69.52 \\
Elementwise gating + Reg. & 69.28 & 69.46 & 69.35 \\
Headwise gating           & 69.02 & 69.05 & 69.13 \\
Headwise gating + Reg.    & 69.28 & 69.15 & 68.95 \\
\bottomrule
\end{tabular}
\end{table}

\begin{table}[t] \centering \caption{Linear probing performance on dense prediction tasks. We report mean and standard deviation over three independently trained LeJEPA backbones.} \label{tab:lejepa_dense_prediction_app} \begin{tabular}{ l r@{${}\pm{}$}l r@{${}\pm{}$}l } \toprule Variant & \multicolumn{2}{c}{ADE20K} & \multicolumn{2}{c}{VOC12} \\ \midrule Baseline & $0.166$ & $0.006$ & $0.438$ & $0.016$ \\ Registers & $0.187$ & $0.002$ & $0.499$ & $0.001$ \\ Headwise gating & $0.185$ & $0.0003$ & $0.476$ & $0.006$ \\ Elementwise gating & $0.186$ & $0.004$ & $0.480$ & $0.002$ \\ Headwise + registers & $0.199$ & $0.003$ & $\mathbf{0.524}$ & $\mathbf{0.013}$ \\ Elementwise + registers & $\mathbf{0.200}$ & $\mathbf{0.006}$ & $0.522$ & $0.017$ \\ \bottomrule \end{tabular} \end{table}

\paragraph{Compute resources.}
Experiments were run on a compute cluster using NVIDIA H100 GPUs. The LeJEPA ViT-L experiments were trained with PyTorch DDP on 4 H100 GPUs per run; each 100-epoch run took approximately 3 days, corresponding to roughly 300 H100 GPU-hours per seed. We trained three seeds for each mitigation variant. The supervised ViT-L/16 gating+registers model was trained with PyTorch DDP on 4 H100 GPUs for 90 epochs for approximately 1 day, corresponding to roughly 100 H100 GPU-hours. Training linear semantic-segmentation probes on frozen LeJEPA backbones required one GPU per probe and only a few GPU-hours per probe. Similarly, toy-model experiments and pretrained-model diagnostics were run on a single GPU and were substantially cheaper than the LeJEPA training runs, typically completing within a few GPU-hours per run. All experiments were implemented in PyTorch.

\paragraph{Existing assets and licenses.}
Table~\ref{tab:asset_licenses} lists the third-party datasets, codebases, and pretrained model
checkpoints used in this work, together with their licenses or terms of use. We do not redistribute
any third-party datasets or pretrained checkpoints.

\begin{table}[h]
\tiny
\centering
\caption{Existing assets used in this work and their licenses or terms of use.}
\label{tab:asset_licenses}
\begin{tabular}{lll}
\toprule
Asset & Use in this paper & License or terms of use \\
\midrule
ImageNet-1K &
LeJEPA PT, supervised training, evaluation &
\makecell[l]{ImageNet Terms of Access:\\non-commercial research and educational use} \\

ADE20K &
Linear semantic-segmentation probing &
\makecell[l]{ADE20K Terms of Use:\\non-commercial research and educational use} \\

Pascal VOC 2012 &
Linear semantic-segmentation probing &
\makecell[l]{PASCAL VOC dataset terms / citation requirements;\\used for research benchmarking} \\

DINOv2 code and checkpoints &
Pretrained ViT sink analysis &
\makecell[l]{Apache License 2.0} \\

OpenCLIP code/checkpoints &
Pretrained ViT sink analysis &
\makecell[l]{MIT License for OpenCLIP code;\\checkpoint/model-card terms as released by OpenCLIP} \\

EVA code/checkpoints &
Pretrained ViT sink analysis &
\makecell[l]{MIT License for EVA repository;\\checkpoint/model-card terms as released by EVA} \\

LeJEPA code &
Model architecture and training framework &
\makecell[l]{CC BY-NC 4.0 License} \\

%Gated Attention code/method &
%Gated attention implementation reference &
%\makecell[l]{License of the authors' released repository, if used;\\otherwise paper citation only} \\
\bottomrule
\end{tabular}
\end{table}

\newpage

\section{Additional Experimental Results and Figures}
\label{app:additional_results}

To produce the results in Figure \ref{fig:dino_vitl_sinks} (right), we trained a ViT-L/16 with gating and registers from scratch on ImageNet-1k using supervised learning. The model was trained for 90 epochs using AdamW with a peak learning rate of $5 \times 10^{-4}$ and an effective batch size of 512, achieving a top-1 validation accuracy of 76.18\%.

%%% FIGURE 7 %%%

\begin{figure*}[h!]
\centering
\includegraphics[width=0.9\linewidth]{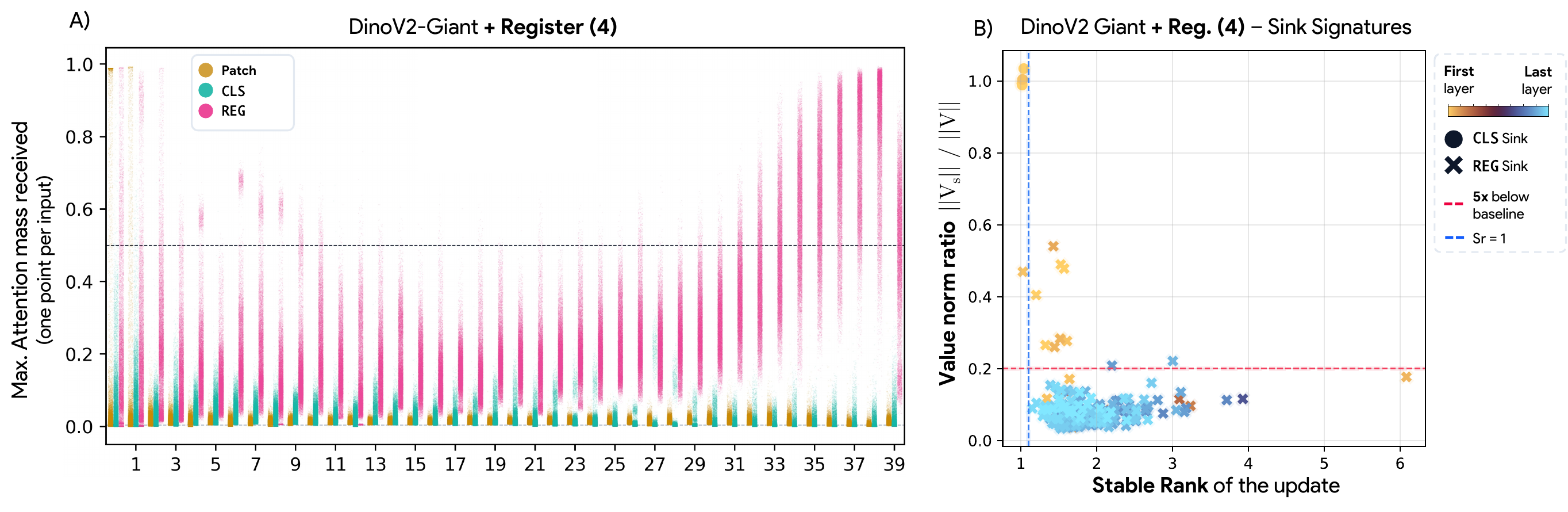}
\vspace{-2mm}
\caption{
\textbf{(Left) Registers absorb sink behavior.} In DINOv2-G + Reg.(4), register tokens (pink) capture nearly all attention mass across layers, displacing patch and \texttt{[CLS]} sinks.
\textbf{(Right) Registers inherit both regimes.} Register sinks cluster into the same two phenotypes: \nop~(low norm, majority) and broadcast (high norm, rank-1). Registers are repurposed for both mechanisms.
}
\label{fig:register_dino}
\end{figure*}

\begin{figure*}[h!]
\centering
\vspace{-4mm}
\includegraphics[width=\linewidth]{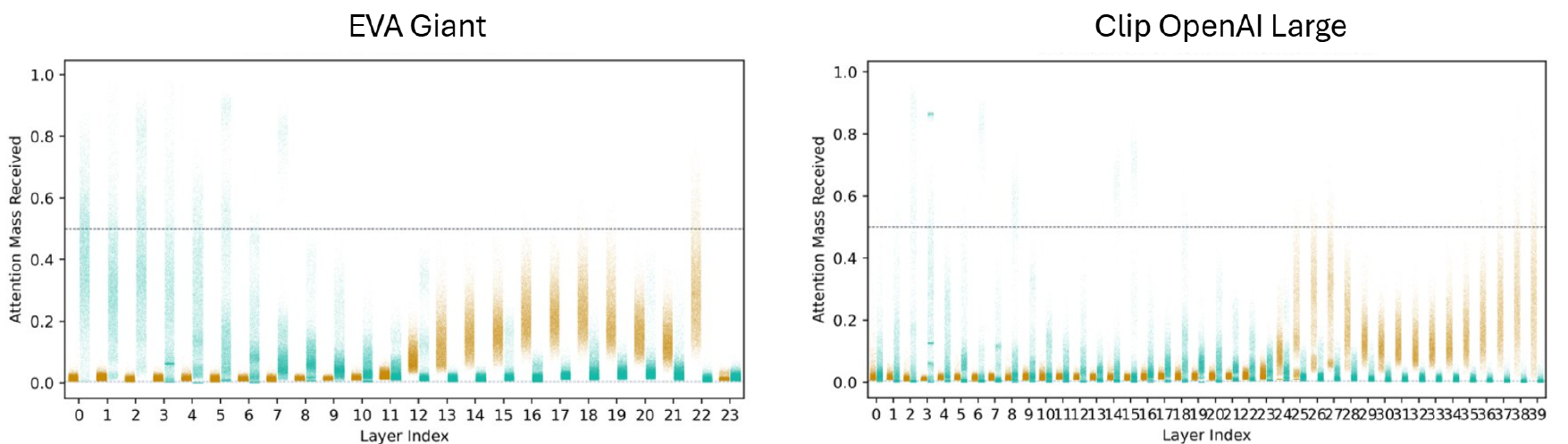}
\caption{\textbf{Sink token transition across layers.} EVA Giant and Clip OpenAI Large both exhibit a handoff pattern, although the pattern is more pronounced in EVA: \texttt{[CLS]} serves as the sink in early layers but yields to patch tokens in later layers.}
\label{fig:sink_distribution_app}
\end{figure*}

\begin{figure}[t!]
\centering
\includegraphics[width=0.9\linewidth]{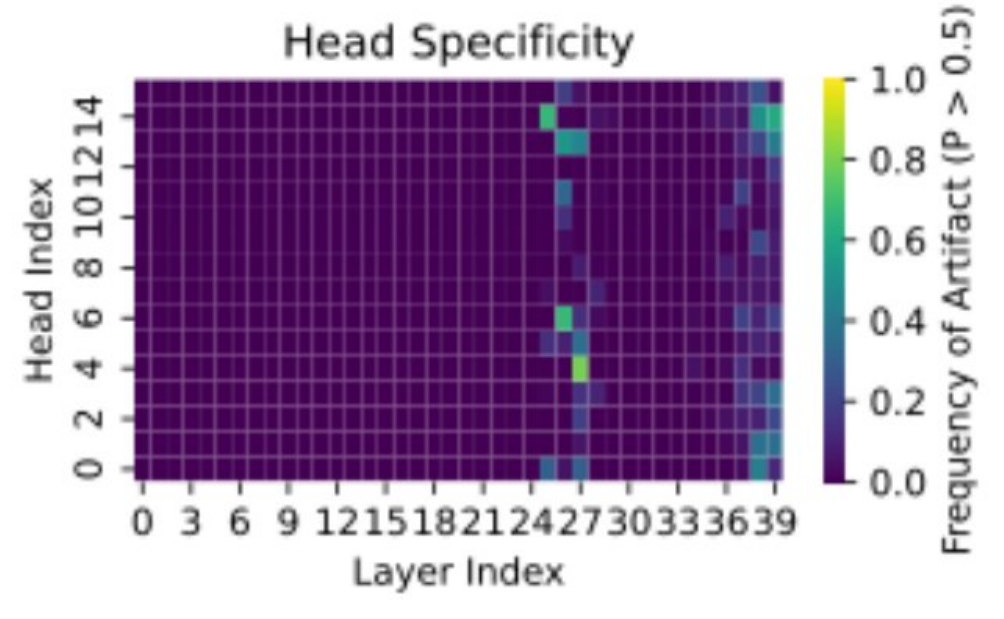}
\caption{\textbf{Head specialization in EVA Giant.} The entropy of values per layer shows that sink behavior is head-specific.}
\label{fig:head_specificity_eva}
\end{figure}

\begin{figure}[t!]
\centering
\includegraphics[width=0.9\linewidth]{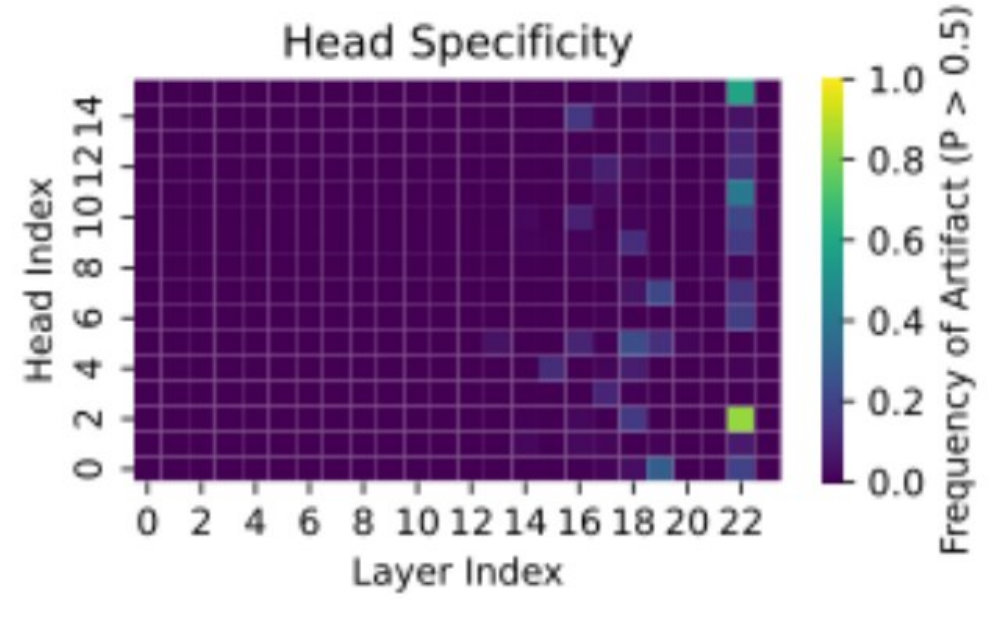}
\caption{\textbf{Head specialization in Clip OpenAI Large.} The entropy of values per layer shows that sink behavior is head-specific.}
\label{fig:head_specificity_clip}
\end{figure}

\begin{figure}[t!]
\centering
\includegraphics[width=0.95\linewidth]{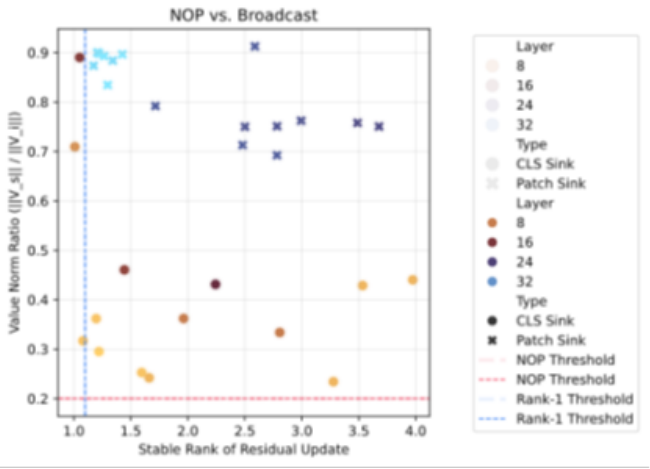}
\caption{\textbf{Dual Phenomenology in EVA Giant.} Sinks cluster into \nop~sinks (low norm, bottom) and broadcast sinks (moderate to high norm, $\approx$rank-1 update, left). Both regimes coexist within the same model.}
\label{fig:eva_phenom}
\end{figure}

\begin{figure}[t!]
\centering
\includegraphics[width=0.95\linewidth]{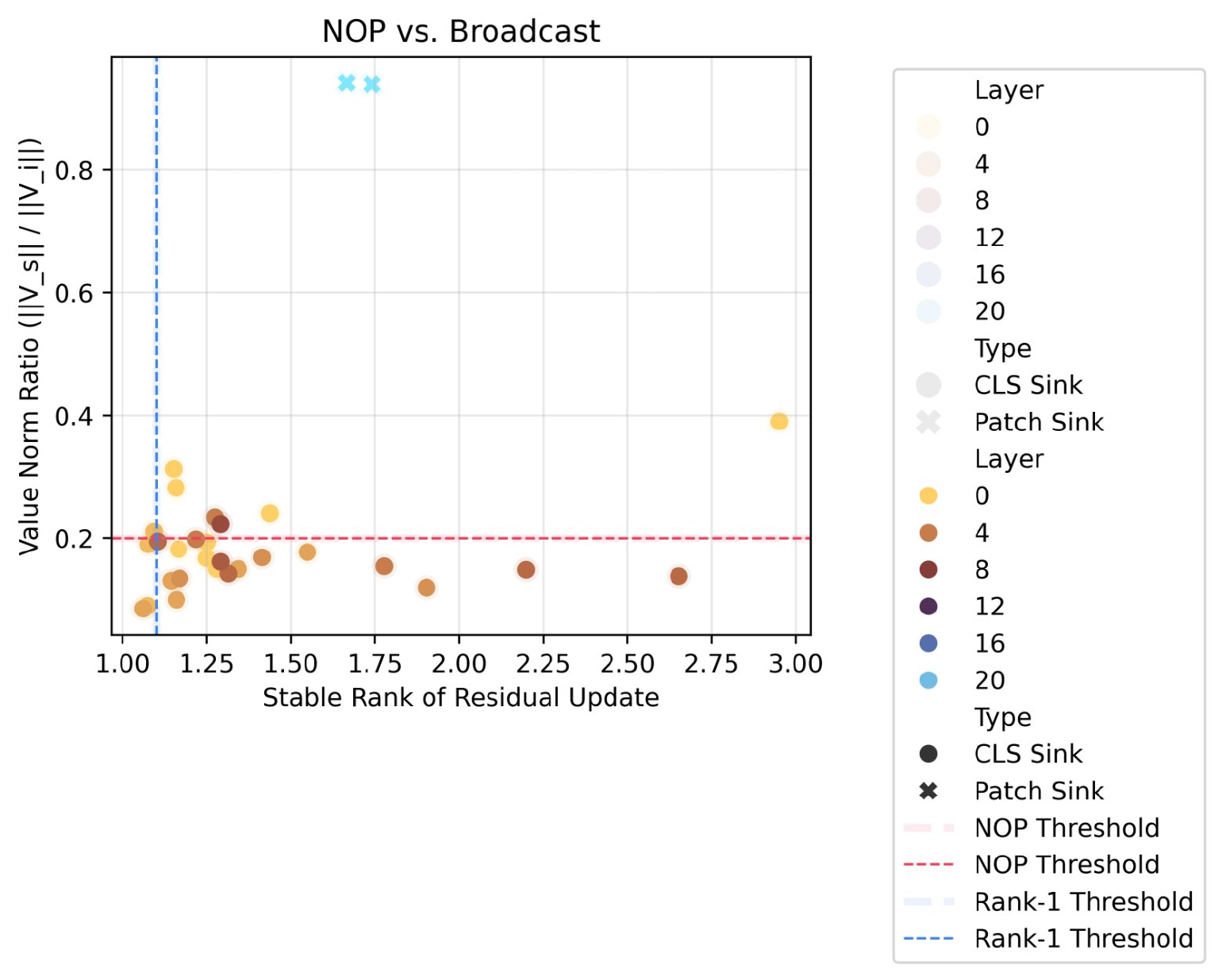}
\caption{\textbf{Dual Phenomenology in Clip OpenAI Large.} Sinks cluster into \nop~sinks (low norm, bottom) and broadcast sinks (moderate to high norm, $\approx$rank-1 update, left). Both regimes coexist within the same model.}
\label{fig:clip_phenom}
\end{figure}

\begin{figure*}[h]
    \centering
    \includegraphics[width=\textwidth]{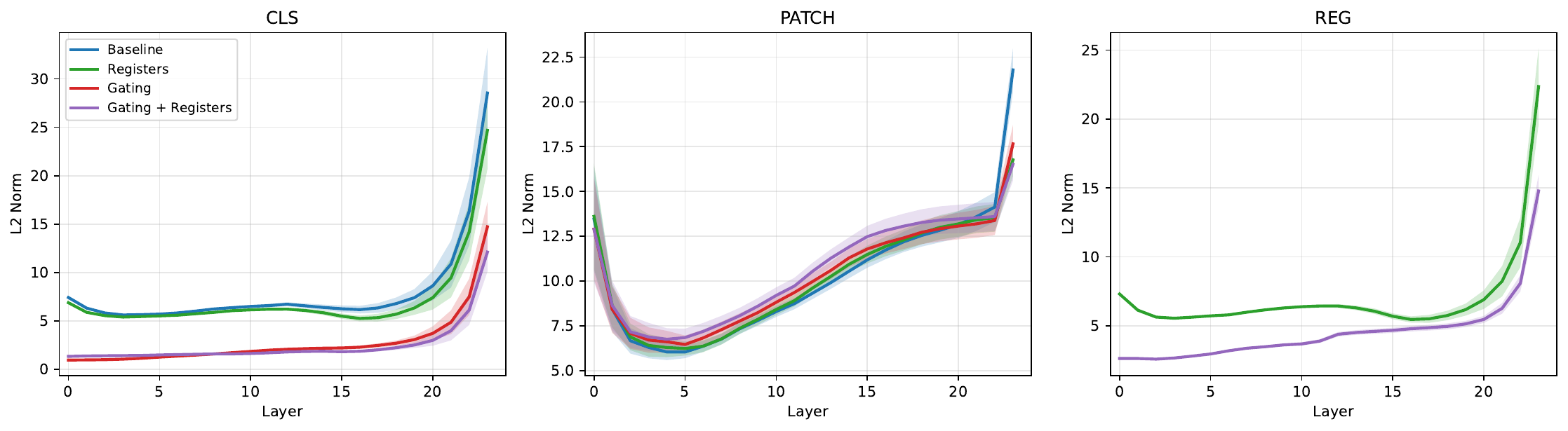}
    \caption{
    \textbf{Gating and registers mitigate high norms.}
    Distribution of token norms across layers on the ImageNet-1k validation set.
    }
    \label{fig:vitl_norm_comparison}
\end{figure*}

%%%%%%%%%%%%%%%%%%%%%%%%%%%%%%%%%%%%%%%%%%%%%%%%%%%%%%%%%%%%

%\clearpage

%\input{checklist.tex}

\end{document}